\documentclass{article}

\PassOptionsToPackage{numbers,sort&compress}{natbib}

\usepackage[preprint]{neurips_2026}

\usepackage[utf8]{inputenc} 
\usepackage[T1]{fontenc}    
\usepackage{hyperref}       
\usepackage{url}            
\usepackage{booktabs}       
\usepackage{amsfonts}       
\usepackage{amsmath}        
\usepackage{amssymb}        
\usepackage{amsthm}         
\newtheorem{theorem}{Theorem}
\usepackage{nicefrac}       
\usepackage{microtype}      
\usepackage{xcolor}         
\usepackage{graphicx}       
\usepackage{subcaption}     
\usepackage{tikz}           
\usepackage{float}          
\usepackage{array}          
\usetikzlibrary{positioning,arrows.meta,shapes,decorations.pathreplacing,calc}

\newcommand{\indicator}{\mathbf{1}}

\makeatletter
\renewenvironment{table}
  {\setlength{\abovecaptionskip}{8\p@}%
   \setlength{\belowcaptionskip}{\z@}%
   \@float{table}}
  {\end@float}
\makeatother

\definecolor{berkeleyblue}{RGB}{0,50,98}
\definecolor{berkeleylightblue}{RGB}{59,126,161}
\definecolor{berkeleygold}{RGB}{253,181,21}
\definecolor{berkeleydarkgold}{RGB}{196,130,14}

\title{KLip-PPO: A per-sample KL perspective on PPO-Clip}

\author{%
  Riccardo Colletti\thanks{Equal contribution.} \\
  University of California, Berkeley \\
  \texttt{riccardo\_colletti [at] berkeley.edu} \\
  \And
  Robin Holzinger\footnotemark[1] \\
  University of California, Berkeley \\
  \texttt{robin.holzinger [at] berkeley.edu}
}

\begin{document}

\maketitle
\begingroup
\renewcommand\thefootnote{}
\footnotetext{\begin{tabular}[t]{@{}l@{}}Code: \url{https://github.com/learning-mechanisms/KLip-PPO}\\ Project page: \url{https://klip-ppo.org}\\ Public W\&B artifacts: \url{https://wandb.ai/KLip-PPO/KLip-PPO}\end{tabular}}
\endgroup

\begin{abstract}
  Proximal Policy Optimization (PPO) is the standard policy-gradient
  algorithm for on-policy reinforcement learning. The literature
  presents it in two forms, a clipped surrogate that bounds the
  importance ratio between successive policies and a
  Kullback--Leibler penalty between them. These forms are treated as
  separate algorithms with their own gradients, their own
  hyperparameters, and their own reference implementations, and a
  sizeable body of empirical work compares them. We show that the
  gradient of the clipped surrogate is reproduced exactly by a
  Kullback--Leibler surrogate whose coefficient varies per sample,
  with closed-form dependence on the importance ratio and the
  advantage. The identity holds at every minibatch step and across
  the entire inner loop, and on five MuJoCo continuous-control
  benchmarks the two losses produce indistinguishable training
  curves. The reformulation exposes a structural feature of
  the clipped surrogate that the $\min$ notation hides. PPO-Clip's
  implicit per-sample penalty is a step function at the boundary of
  the trust region, and the shape of this coefficient is the natural
  design axis for generalising the algorithm. We sketch the
  resulting follow-up directions in the discussion.
\end{abstract}

\section{Introduction}
\label{sec:intro}

Proximal Policy Optimization \citep{schulman2017ppo} is the default
policy-gradient algorithm for on-policy reinforcement learning. The
method approximates the trust-region step of TRPO
\citep{schulman2015trpo, kakade2002approximately} by maximising a
surrogate objective that keeps the new policy close to the rollout
policy. The original work proposes two surrogates. The first clips
the importance ratio between the new and old policies inside a fixed
band. The second adds an adaptive Kullback--Leibler penalty between
them. \citet{schulman2017ppo} report that the clipped variant
outperforms the penalty variant on MuJoCo continuous-control
benchmarks and recommend it as the default. The community has
followed the recommendation. PPO-Clip is the de facto choice in
modern open-source implementations \citep{raffin2021sb3, huang2022cleanrl},
and the clipped surrogate is the building block of token-level
extensions such as GRPO \citep{shao2024deepseekmath} that underpin
recent reasoning models \citep{deepseekai2025r1}.

Subsequent empirical work has scrutinised PPO from many angles.
\citet{engstrom2020implementation} show that ``code-level''
optimisations explain most of PPO's gain over TRPO and that the clip
mechanism itself is not load-bearing for performance.
\citet{ilyas2020closer} demonstrate that auxiliary optimisations,
rather than the clip term, are what actually maintain the trust
region. \citet{andrychowicz2021matters} train over $2.5 \times 10^5$
agents to compare design choices and recommend PPO-Clip among five
policy losses, though they do not run a standalone PPO-KL variant.
\citet{hsu2020revisiting} report that KL-regularised PPO matches or
outperforms the clipped variant outside MuJoCo with Gaussian
policies, and \citet{sun2022espo} argue that ratio clipping is not
necessary in PPO at all. Across this body of work the clip and KL
forms are treated as alternative algorithmic choices to be compared
empirically.

We show that this treatment misunderstands the relationship between
the two surrogates. The per-sample gradient of PPO-Clip is reproduced
exactly by a Kullback--Leibler surrogate whose coefficient varies
per sample, with closed-form dependence on the importance ratio and
the advantage. The identity holds at every minibatch step and across
the entire inner loop. On HalfCheetah, Hopper, Walker2d, Ant, and
Humanoid \citep{todorov2012mujoco} the two losses produce
indistinguishable training curves. Where the original PPO
paper notes that $L^{\mathrm{CLIP}}$ and the unclipped surrogate
agree to first order around $\theta_{\mathrm{old}}$
\citep{schulman2017ppo}, we strengthen the statement to a per-sample
identity that holds at every parameter configuration. Recent work
has analysed gradient-level relationships between different KL
formulations \citep{yao2025rethinkingkl} and proposed unified
clip-plus-KL design frameworks \citep{zhang2026rpg}, but to our
knowledge no prior work establishes the per-sample equivalence
between PPO-Clip and PPO-KL itself.

Making the implicit coefficient explicit clarifies the position of
PPO-Clip in the policy-optimisation landscape. The clipped surrogate
is a per-sample KL penalty whose coefficient is a step function on
the trust-region boundary, and the \textbf{shape of this
coefficient} is the natural design axis for generalising the
algorithm. Soft relaxations of the step, asymmetric and
position-conditioned penalties, and off-policy extensions all fit
inside the same template; we sketch these directions in
Section~\ref{sec:future}.

The paper is organised as follows.
Section~\ref{sec:background} reviews PPO-Clip, PPO-KL, and the
existing comparisons between them.
Section~\ref{sec:theorem} states and proves the per-sample gradient
identity. Section~\ref{sec:experiments} validates the identity
empirically on five MuJoCo continuous-control benchmarks.
Section~\ref{sec:future} surveys natural extensions of the framework.
Section~\ref{sec:conclusion} concludes.

\section{Background}
\label{sec:background}

We adopt the standard on-policy actor-critic setting
\citep{sutton2018reinforcement, sutton2000policygradient}. A behaviour
policy $\pi_\theta$ collects a rollout of $N$ trajectories of horizon
$H$ in a Markov decision process. For each sample $(i, t)$ the
rollout records a state $s_t^{(i)}$, an action $a_t^{(i)}$, a reward
$r_t^{(i)}$, and an advantage estimate $\hat{A}_t^{(i)}$ produced by
generalized advantage estimation \citep{schulman2016gae}. An update
step lifts the rollout policy $\pi_\theta$ to a new policy
$\pi_{\theta'}$ by repeated stochastic gradient steps on a surrogate
objective \citep{kakade2002approximately, schulman2015trpo}; the
importance ratio $w_t^{(i)} = \pi_{\theta'}(a_t^{(i)} \mid s_t^{(i)})
/ \pi_\theta(a_t^{(i)} \mid s_t^{(i)})$ tracks how much $\pi_{\theta'}$
has moved away from $\pi_\theta$ on the sampled actions. The two
surrogate objectives we study are the proximal-policy
\citep{schulman2017ppo} pair: PPO-Clip and PPO-KL.

\subsection{The surrogate objective}

Let $\pi_\theta$ denote the policy parameterised by $\theta$. PPO
collects a rollout under the behaviour policy $\pi_\theta$ and
updates the parameters to $\theta'$ by repeated stochastic gradient
steps on a surrogate objective. The starting point is the importance
sampled return
\citep{kakade2002approximately, schulman2015trpo}, written for a
batch of $N$ episodes of horizon $H$ as
\[
L_{\mathrm{IS}}(\theta')
\;=\;
\frac{1}{N}\sum_{i=1}^{N}\sum_{t=1}^{H}
w_t^{(i)}\,\hat{A}_t^{(i)},
\qquad
w_t^{(i)} \;=\;
\frac{\pi_{\theta'}\!\left(a_t^{(i)} \mid s_t^{(i)}\right)}
     {\pi_{\theta}\!\left(a_t^{(i)} \mid s_t^{(i)}\right)},
\]
where $\hat{A}_t^{(i)}$ is an estimator of the advantage at the
transition $(s_t^{(i)}, a_t^{(i)})$. Direct optimisation of
$L_{\mathrm{IS}}$ is unstable because the importance ratios
$w_t^{(i)}$ can become large when $\theta'$ drifts away from
$\theta$. \citet{schulman2015trpo} address the instability by
constraining the KL divergence between $\pi_{\theta'}$ and
$\pi_\theta$. \citet{schulman2017ppo} replace the constrained
optimisation by one of two penalised first-order surrogates.

\subsection{PPO-Clip}

The clipped surrogate of \citet{schulman2017ppo} is
\begin{equation}
L_{\mathrm{CLIP}}(\theta')
\;=\;
\frac{1}{N}\sum_{i=1}^{N}\sum_{t=1}^{H}
\min\!\Bigl\{
  w_t^{(i)}\,\hat{A}_t^{(i)},\;\;
  \mathrm{clip}\!\left(w_t^{(i)},\,1-\epsilon,\,1+\epsilon\right)\,
  \hat{A}_t^{(i)}
\Bigr\},
\label{eq:clip-obj}
\end{equation}
with $\epsilon \in (0,1)$ a fixed hyperparameter (typically
$\epsilon = 0.2$). The $\min$ acts as a one-sided trust region. When
$\hat{A}_t^{(i)} > 0$, increasing $w_t^{(i)}$ beyond $1+\epsilon$ no
longer increases the loss; when $\hat{A}_t^{(i)} < 0$, decreasing
$w_t^{(i)}$ below $1-\epsilon$ no longer increases the loss. This
prevents the optimiser from exploiting large positive surrogate
values that would correspond to large policy changes.

\subsection{PPO-KL}

The KL-penalised surrogate of \citet{schulman2017ppo} adds an
explicit divergence term,
\[
\mathcal{L}_{\mathrm{KL}}(\theta')
\;=\;
L_{\mathrm{IS}}(\theta')
\;-\;
\beta\,\widehat{D}_{\mathrm{KL}}\!\bigl[\pi_\theta \,\Vert\, \pi_{\theta'}\bigr],
\]
where $\widehat{D}_{\mathrm{KL}}$ is an empirical KL estimate and
$\beta > 0$ is a scalar coefficient that is held fixed or adapted at
the end of each outer update to keep the empirical KL near a target
\citep{schulman2017ppo, heess2017dppo}. In the fixed-$\beta$ form,
$\beta$ is shared across samples and across training. In the
adaptive form, $\beta$ is multiplied or divided by a constant
whenever the empirical KL exceeds or falls below the target.

In both forms, $\beta$ is a single scalar applied uniformly to all
samples in the batch. The comparison of clipping against scalar-KL
penalisation has been studied extensively
\citep{schulman2017ppo, engstrom2020implementation, ilyas2020closer,
andrychowicz2021matters, hsu2020revisiting, sun2022espo}. The two
surrogates are taken throughout this literature as algorithmically
distinct; the gradient identity we establish in
Section~\ref{sec:theorem} shows that, at the per-sample level, they
are not.

\section{The per-sample KL view of PPO-Clip}
\label{sec:theorem}

\subsection{Partition of the rollout}

The gradient of $L_{\mathrm{CLIP}}$ depends on which of the two
arguments of the inner $\min$ is active for each sample. Fix
$\theta'$ and define the importance ratio $w_t^{(i)}$ as in
Section~\ref{sec:background}. The pairs $(i,t)$ partition into three
disjoint index sets,
\begin{align*}
\mathcal{I}_{\mathrm{in}}
  &\;=\;
  \bigl\{(i,t) \;:\; w_t^{(i)} \in [1-\epsilon,\, 1+\epsilon]\bigr\}
  \;\cup\;
  \bigl\{(i,t) \;:\; \hat{A}_t^{(i)} = 0\bigr\}, \\
\mathcal{I}_{\mathrm{kill}}
  &\;=\;
  \bigl\{(i,t) \;:\; w_t^{(i)} > 1+\epsilon \text{ and } \hat{A}_t^{(i)} > 0\bigr\}
  \;\cup\;
  \bigl\{(i,t) \;:\; w_t^{(i)} < 1-\epsilon \text{ and } \hat{A}_t^{(i)} < 0\bigr\}, \\
\mathcal{I}_{\mathrm{pass}}
  &\;=\;
  \bigl\{(i,t) \;:\; w_t^{(i)} > 1+\epsilon \text{ and } \hat{A}_t^{(i)} < 0\bigr\}
  \;\cup\;
  \bigl\{(i,t) \;:\; w_t^{(i)} < 1-\epsilon \text{ and } \hat{A}_t^{(i)} > 0\bigr\}.
\end{align*}
Intuitively, $\mathcal{I}_{\mathrm{in}}$ contains the samples for
which the clip is inactive (together with the zero-advantage samples,
which contribute no gradient to either surrogate), $\mathcal{I}_{\mathrm{kill}}$ the samples
for which the clip suppresses the gradient because the policy is
already moving in the advantage-improving direction, and
$\mathcal{I}_{\mathrm{pass}}$ the samples for which the clip leaves
the unclipped term active because the policy is moving against the
advantage.

\subsection{Gradient of PPO-Clip}

The gradient of a sum is the sum of gradients, and the gradient of
the inner $\min$ on each sample depends on which of its two arguments
is active.

\paragraph{Case $\mathcal{I}_{\mathrm{in}}$:} when $w_t^{(i)} \in
[1-\epsilon, 1+\epsilon]$, the clipping operator leaves $w_t^{(i)}$
unchanged, so $\mathrm{clip}(w_t^{(i)}, 1-\epsilon, 1+\epsilon) =
w_t^{(i)}$ and both arguments of the $\min$ coincide:
\[
\min\!\bigl\{ w_t^{(i)}\hat{A}_t^{(i)},\;\;
              \mathrm{clip}(w_t^{(i)},\,1-\epsilon,\,1+\epsilon)\hat{A}_t^{(i)} \bigr\}
\;=\; w_t^{(i)}\hat{A}_t^{(i)}.
\]
Its gradient is $\hat{A}_t^{(i)}\,\nabla_{\theta'} w_t^{(i)}$.

\paragraph{Case $\mathcal{I}_{\mathrm{kill}}$:} consider first the
subcase $w_t^{(i)} > 1+\epsilon$ and $\hat{A}_t^{(i)} > 0$. The
clipped value $\mathrm{clip}(w_t^{(i)}, 1-\epsilon, 1+\epsilon)$
saturates at $1+\epsilon$, so the clipped term equals $(1+\epsilon)
\hat{A}_t^{(i)}$, while the unclipped term equals
$w_t^{(i)}\hat{A}_t^{(i)}$. Because $\hat{A}_t^{(i)} > 0$ and
$w_t^{(i)} > 1+\epsilon$, the clipped term is the smaller of the two
and the $\min$ selects it. The clipped value is constant in
$\theta'$, so its gradient vanishes. The symmetric subcase
$w_t^{(i)} < 1-\epsilon$ and $\hat{A}_t^{(i)} < 0$ is analogous: the
clipped term saturates at $(1-\epsilon)\hat{A}_t^{(i)}$, which is
smaller (more negative) than the unclipped term $w_t^{(i)}
\hat{A}_t^{(i)}$, and the gradient vanishes again. In both
subcases the per-sample gradient is zero.

\paragraph{Case $\mathcal{I}_{\mathrm{pass}}$:} consider the
subcase $w_t^{(i)} > 1+\epsilon$ and $\hat{A}_t^{(i)} < 0$. The
clipped term equals $(1+\epsilon)\hat{A}_t^{(i)}$ and the unclipped
term equals $w_t^{(i)}\hat{A}_t^{(i)}$. Because $\hat{A}_t^{(i)} <
0$ and $w_t^{(i)} > 1+\epsilon$, the unclipped term is more negative
and the $\min$ selects it. The symmetric subcase $w_t^{(i)} <
1-\epsilon$ and $\hat{A}_t^{(i)} > 0$ is analogous. In both
subcases the active term is the unclipped $w_t^{(i)}\hat{A}_t^{(i)}$
and its gradient is $\hat{A}_t^{(i)}\,\nabla_{\theta'} w_t^{(i)}$.

Combining the three cases,
\[
\nabla_{\theta'} L_{\mathrm{CLIP}}
\;=\;
\frac{1}{N} \sum_{(i,t)\,\in\,\mathcal{I}_{\mathrm{in}}\,\cup\,\mathcal{I}_{\mathrm{pass}}}
\hat{A}_t^{(i)}\, \nabla_{\theta'} w_t^{(i)},
\]
where samples in $\mathcal{I}_{\mathrm{kill}}$ contribute zero. To
make the dependence on the policy explicit, observe that
$\pi_{\theta}(a_t^{(i)} \mid s_t^{(i)})$ does not depend on
$\theta'$, so
\[
\nabla_{\theta'} w_t^{(i)}
\;=\;
\nabla_{\theta'} \!
\left[
\frac{\pi_{\theta'}(a_t^{(i)} \mid s_t^{(i)})}
     {\pi_{\theta}(a_t^{(i)} \mid s_t^{(i)})}
\right]
\;=\;
\frac{\nabla_{\theta'} \pi_{\theta'}\!\left(a_t^{(i)} \mid s_t^{(i)}\right)}
     {\pi_{\theta}\!\left(a_t^{(i)} \mid s_t^{(i)}\right)}.
\]
Substituting yields
\begin{equation}
\nabla_{\theta'} L_{\mathrm{CLIP}}
\;=\;
\frac{1}{N} \sum_{(i,t)\,\in\,\mathcal{I}_{\mathrm{in}}\,\cup\,\mathcal{I}_{\mathrm{pass}}}
\hat{A}_t^{(i)} \,
\frac{\nabla_{\theta'} \pi_{\theta'}\!\left(a_t^{(i)} \mid s_t^{(i)}\right)}
     {\pi_\theta\!\left(a_t^{(i)} \mid s_t^{(i)}\right)}.
\label{eq:clip-grad}
\end{equation}
Table~\ref{tab:index-sets} summarises the per-sample gradient
contribution on each index set.

\begin{table}[H]
\centering
\small
\renewcommand{\arraystretch}{1.6}
\resizebox{\textwidth}{!}{%
\begin{tabular}{@{} >{\centering\arraybackslash}p{2.4cm} p{4.8cm} >{\centering\arraybackslash}p{3.8cm} p{3.2cm} @{}}
\toprule
\textbf{Index set} & \textbf{Condition} & \textbf{Active term in $\min$} & \textbf{Gradient} \\
\midrule
$\mathcal{I}_{\mathrm{in}}$ & $w_t^{(i)} \in [1\!-\!\epsilon,\; 1\!+\!\epsilon]$\newline \textit{or}\newline $\hat{A}_t^{(i)} = 0$ & $w_t^{(i)}\hat{A}_t^{(i)}$ (both equal) & $\hat{A}_t^{(i)}\,\nabla_{\theta'} w_t^{(i)}$ \\
\midrule
$\mathcal{I}_{\mathrm{kill}}$ & $w_t^{(i)} > 1\!+\!\epsilon$, $\hat{A}_t^{(i)}\!>\!0$\newline \textit{or}\newline $w_t^{(i)} < 1\!-\!\epsilon$, $\hat{A}_t^{(i)}\!<\!0$ & clipped term (constant) & $\mathbf{0}$ {\footnotesize\color{red!70!black}(killed)} \\
\midrule
$\mathcal{I}_{\mathrm{pass}}$ & $w_t^{(i)} > 1\!+\!\epsilon$, $\hat{A}_t^{(i)}\!<\!0$\newline \textit{or}\newline $w_t^{(i)} < 1\!-\!\epsilon$, $\hat{A}_t^{(i)}\!>\!0$ & unclipped term $w_t^{(i)}\hat{A}_t^{(i)}$ & $\hat{A}_t^{(i)}\,\nabla_{\theta'} w_t^{(i)}$ \\
\bottomrule
\end{tabular}%
}
\caption{Per-sample contribution of the PPO-Clip objective on the three index sets. The gradient is killed only on $\mathcal{I}_{\mathrm{kill}}$, where further policy change would push the importance ratio further outside the trust region.}
\label{tab:index-sets}
\end{table}

\subsection{Gradient of PPO-KL}

The empirical KL estimate
$\widehat{D}_{\mathrm{KL}}\!\bigl[\pi_\theta \,\Vert\, \pi_{\theta'}\bigr]$
of Section~\ref{sec:background} can be written, up to a
$\theta'$-independent additive constant that drops out of the
gradient, as a sum of negative log-probabilities of the sampled
actions under the new policy. After absorbing the constant and
allowing the penalty coefficient to vary per sample, the surrogate
becomes
\[
\mathcal{L}_{\mathrm{KL}}(\theta')
\;=\;
\frac{1}{N}\sum_{i=1}^{N}\sum_{t=1}^{H}\!\Bigl[
  w_t^{(i)}\,\hat{A}_t^{(i)}
  +
  \beta_t^{(i)}\,\log \pi_{\theta'}\!\left(a_t^{(i)} \mid s_t^{(i)}\right)
\Bigr].
\]
The standard fixed and adaptive PPO-KL variants of
\citet{schulman2017ppo} correspond to the choice
$\beta_t^{(i)} \equiv \beta$, with $\beta$ either fixed or updated
between outer iterations.

For the importance-sampled return term,
\[
\nabla_{\theta'}\!\left[w_t^{(i)}\,\hat{A}_t^{(i)}\right]
\;=\;
\hat{A}_t^{(i)}\,\nabla_{\theta'} w_t^{(i)}
\;=\;
\hat{A}_t^{(i)}\,
\frac{\nabla_{\theta'} \pi_{\theta'}\!\left(a_t^{(i)} \mid s_t^{(i)}\right)}
     {\pi_\theta\!\left(a_t^{(i)} \mid s_t^{(i)}\right)},
\]
using the same calculation as in the PPO-Clip derivation. For the
penalty term, the coefficient $\beta_t^{(i)}$ is a \emph{stop-gradient}
(detached) coefficient: it is evaluated at the current $\theta'$ and
held fixed when differentiating, as is standard for the penalty
coefficient of a KL-penalised policy gradient and as the
implementation does. Its derivative in $\theta'$ vanishes by
convention, so
\[
\nabla_{\theta'}\!\left[\beta_t^{(i)}\log\pi_{\theta'}\!\left(a_t^{(i)} \mid s_t^{(i)}\right)\right]
\;=\;
\beta_t^{(i)}\,
\frac{\nabla_{\theta'} \pi_{\theta'}\!\left(a_t^{(i)} \mid s_t^{(i)}\right)}
     {\pi_{\theta'}\!\left(a_t^{(i)} \mid s_t^{(i)}\right)}.
\]
Multiplying numerator and denominator of the right-hand side by
$\pi_\theta(a_t^{(i)} \mid s_t^{(i)})$ converts the rollout-policy
denominator to the same form as the importance-sampled term,
\[
\beta_t^{(i)}\,
\frac{\nabla_{\theta'} \pi_{\theta'}\!\left(a_t^{(i)} \mid s_t^{(i)}\right)}
     {\pi_{\theta'}\!\left(a_t^{(i)} \mid s_t^{(i)}\right)}
\;=\;
\beta_t^{(i)}\,
\frac{\nabla_{\theta'} \pi_{\theta'}\!\left(a_t^{(i)} \mid s_t^{(i)}\right)}
     {\pi_\theta\!\left(a_t^{(i)} \mid s_t^{(i)}\right)}\,
\cdot\,
\frac{\pi_\theta\!\left(a_t^{(i)} \mid s_t^{(i)}\right)}
     {\pi_{\theta'}\!\left(a_t^{(i)} \mid s_t^{(i)}\right)}
\;=\;
\frac{\beta_t^{(i)}}{w_t^{(i)}}\,
\frac{\nabla_{\theta'} \pi_{\theta'}\!\left(a_t^{(i)} \mid s_t^{(i)}\right)}
     {\pi_\theta\!\left(a_t^{(i)} \mid s_t^{(i)}\right)}.
\]
Summing the two contributions and factoring the common
$\nabla_{\theta'} \pi_{\theta'}\!\left(a_t^{(i)} \mid s_t^{(i)}\right)
/ \pi_\theta\!\left(a_t^{(i)} \mid s_t^{(i)}\right)$,
\begin{equation}
\nabla_{\theta'}\mathcal{L}_{\mathrm{KL}}
\;=\;
\frac{1}{N}\sum_{i=1}^{N}\sum_{t=1}^{H}
\frac{\nabla_{\theta'} \pi_{\theta'}\!\left(a_t^{(i)} \mid s_t^{(i)}\right)}
     {\pi_\theta\!\left(a_t^{(i)} \mid s_t^{(i)}\right)}\,
\Biggl[\,\hat{A}_t^{(i)} \;+\; \frac{\beta_t^{(i)}}{w_t^{(i)}}\,\Biggr].
\label{eq:kl-grad}
\end{equation}
Every sample contributes to the gradient. The penalty does not zero
out any term; it shifts the effective advantage of sample $(i,t)$
from $\hat{A}_t^{(i)}$ to
$\hat{A}_t^{(i)} + \beta_t^{(i)}/w_t^{(i)}$.

\subsection{Gradient identity}

Comparison of \eqref{eq:clip-grad} and \eqref{eq:kl-grad} yields the
main result.

\begin{theorem}[Per-sample gradient identity]
\label{thm:identity}
Let $L_{\mathrm{CLIP}}$ be the PPO-Clip surrogate of
\citet{schulman2017ppo} and let $\mathcal{L}_{\mathrm{KL}}$ be the
PPO-KL surrogate with per-sample stop-gradient coefficients
$\{\beta_t^{(i)}\}$, each evaluated at the current $\theta'$ and held
fixed under differentiation. Define
\begin{equation}
\beta_t^{(i)}
\;=\;
\begin{cases}
\;0,
  & (i,t) \in \mathcal{I}_{\mathrm{in}} \cup \mathcal{I}_{\mathrm{pass}}, \\[2pt]
\,-\,w_t^{(i)}\,\hat{A}_t^{(i)},
  & (i,t) \in \mathcal{I}_{\mathrm{kill}}.
\end{cases}
\label{eq:beta-def}
\end{equation}
Then
\[
\nabla_{\theta'} L_{\mathrm{CLIP}} \;=\; \nabla_{\theta'}\mathcal{L}_{\mathrm{KL}}
\]
at every $\theta'$ where $L_{\mathrm{CLIP}}$ is differentiable, namely
wherever no sample lies exactly on a clip boundary
$w_t^{(i)} = 1\pm\epsilon$; this excludes only a measure-zero set, on
which $L_{\mathrm{CLIP}}$ has a kink.
\end{theorem}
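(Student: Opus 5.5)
The proof is a term-by-term comparison of the two gradient expressions already derived, \eqref{eq:clip-grad} for PPO-Clip and \eqref{eq:kl-grad} for PPO-KL. Both are written as sums over samples $(i,t)$ of a scalar weight multiplying the same vector $g_t^{(i)} := \nabla_{\theta'}\pi_{\theta'}(a_t^{(i)}\mid s_t^{(i)})/\pi_\theta(a_t^{(i)}\mid s_t^{(i)})$. It therefore suffices to show that, for each $(i,t)$, the scalar weights coincide once $\beta_t^{(i)}$ is chosen as in \eqref{eq:beta-def}. To compare the sums sample by sample, I first rewrite \eqref{eq:clip-grad} as a sum over all $(i,t)$ with weight $c_t^{(i)}$, where $c_t^{(i)} = \hat{A}_t^{(i)}$ on $\mathcal{I}_{\mathrm{in}}\cup\mathcal{I}_{\mathrm{pass}}$ and $c_t^{(i)}=0$ on $\mathcal{I}_{\mathrm{kill}}$. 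This is legitimate because the three index sets partition the rollout, as established in the partition subsection.

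First, on $\mathcal{I}_{\mathrm{in}}\cup\mathcal{I}_{\mathrm{pass}}$ we have $\beta_t^{(i)}=0$. The KL weight $\hat{A}_t^{(i)}+\beta_t^{(i)}/w_t^{(i)}$ then reduces to $\hat{A}_t^{(i)}$, which equals $c_t^{(i)}$.

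Second, on $\mathcal{I}_{\mathrm{kill}}$ we have $\beta_t^{(i)} = -w_t^{(i)}\hat{A}_t^{(i)}$. The KL weight is then $\hat{A}_t^{(i)} - w_t^{(i)}\hat{A}_t^{(i)}/w_t^{(i)} = 0 = c_t^{(i)}$. This step needs $w_t^{(i)}>0$, which I justify from the fact that $w_t^{(i)}$ is a ratio of probabilities (or densities) of an action sampled under $\pi_\theta$. Both factors are therefore positive, assuming, as is implicit for $\log\pi_{\theta'}$ to be defined, that $\pi_{\theta'}$ assigns positive probability to sampled actions.

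Third, the stop-gradient convention matters. Because $\beta_t^{(i)}$ is evaluated at the current $\theta'$ and treated as a constant under $\nabla_{\theta'}$, \eqref{eq:kl-grad} is valid as written, and the $\theta'$-dependence of $w_t^{(i)}$ inside $\beta_t^{(i)}$ never generates an extra term. Summing the matching per-sample contributions and multiplying by $1/N$ then gives the identity.

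The main subtlety, and the place I expect to need the most care, is the differentiability caveat. The derivation of \eqref{eq:clip-grad} treated each case via the active branch of the $\min$. This is valid only where the branch is locally constant in $\theta'$, i.e.\ where every $w_t^{(i)}\neq 1\pm\epsilon$. At such $\theta'$, continuity of $w_t^{(i)}$ in $\theta'$ guarantees that each sample stays in the same open region, hence in the same index set, on a neighbourhood of $\theta'$. The per-case formulas then hold as genuine gradients there.

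Two further points need checking. Samples with $\hat{A}_t^{(i)}=0$ are harmless, since both summands vanish identically whichever set they are placed in. For the measure-zero claim, I would argue that each boundary $\{\theta' : w_t^{(i)}(\theta')=1\pm\epsilon\}$ is a level set of a smooth function. Such a level set has Lebesgue measure zero whenever the gradient is nonvanishing on it, by the regular value / implicit function theorem. I would state this as holding under that mild regularity assumption, since degenerate parameterisations could in principle violate it.
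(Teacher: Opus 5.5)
Your proposal is correct and follows the paper's proof: you compare, sample by sample, the scalar multipliers of $\nabla_{\theta'}\pi_{\theta'}(a_t^{(i)}\mid s_t^{(i)})/\pi_\theta(a_t^{(i)}\mid s_t^{(i)})$ in \eqref{eq:clip-grad} and \eqref{eq:kl-grad}, with $\beta_t^{(i)}=0$ leaving $\hat{A}_t^{(i)}$ on $\mathcal{I}_{\mathrm{in}}\cup\mathcal{I}_{\mathrm{pass}}$ and $\beta_t^{(i)}=-w_t^{(i)}\hat{A}_t^{(i)}$ cancelling the bracket on $\mathcal{I}_{\mathrm{kill}}$ via $w_t^{(i)}>0$. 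Your additional arguments go slightly beyond what the paper writes out but do not change its logic: you justify that the index sets are locally constant off the clip boundary, so \eqref{eq:clip-grad} is a genuine gradient there, and you state a regularity caveat for the measure-zero claim.
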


\begin{proof}
Fix a sample $(i,t)$ and write $g_t^{(i)}(\theta')$ for its
per-sample gradient contribution under either surrogate. From
\eqref{eq:clip-grad} and \eqref{eq:kl-grad},
\[
g_t^{(i)}\bigl(L_{\mathrm{CLIP}}\bigr)
\;=\;
\begin{cases}
\hat{A}_t^{(i)}\,
\dfrac{\nabla_{\theta'} \pi_{\theta'}\!\left(a_t^{(i)} \mid s_t^{(i)}\right)}
      {\pi_\theta\!\left(a_t^{(i)} \mid s_t^{(i)}\right)},
  & (i,t) \in \mathcal{I}_{\mathrm{in}} \cup \mathcal{I}_{\mathrm{pass}}, \\[12pt]
0,
  & (i,t) \in \mathcal{I}_{\mathrm{kill}},
\end{cases}
\]
and
\[
g_t^{(i)}\bigl(\mathcal{L}_{\mathrm{KL}}\bigr)
\;=\;
\frac{\nabla_{\theta'} \pi_{\theta'}\!\left(a_t^{(i)} \mid s_t^{(i)}\right)}
     {\pi_\theta\!\left(a_t^{(i)} \mid s_t^{(i)}\right)}\,
\Biggl[\,\hat{A}_t^{(i)} \;+\; \frac{\beta_t^{(i)}}{w_t^{(i)}}\,\Biggr].
\]
Under the choice \eqref{eq:beta-def}, the bracket of
$g_t^{(i)}(\mathcal{L}_{\mathrm{KL}})$ takes two values depending on
the index set.

\emph{Case $(i,t) \in \mathcal{I}_{\mathrm{in}} \cup
\mathcal{I}_{\mathrm{pass}}$.} The definition
\eqref{eq:beta-def} gives $\beta_t^{(i)} = 0$, so the bracket
reduces to $\hat{A}_t^{(i)}$. Hence
\[
g_t^{(i)}\bigl(\mathcal{L}_{\mathrm{KL}}\bigr)
\;=\;
\hat{A}_t^{(i)}\,
\frac{\nabla_{\theta'} \pi_{\theta'}\!\left(a_t^{(i)} \mid s_t^{(i)}\right)}
     {\pi_\theta\!\left(a_t^{(i)} \mid s_t^{(i)}\right)}
\;=\;
g_t^{(i)}\bigl(L_{\mathrm{CLIP}}\bigr).
\]

\emph{Case $(i,t) \in \mathcal{I}_{\mathrm{kill}}$.} The definition
\eqref{eq:beta-def} gives $\beta_t^{(i)} = -w_t^{(i)}\hat{A}_t^{(i)}$.
The bracket evaluates to
\[
\hat{A}_t^{(i)} \;+\; \frac{\beta_t^{(i)}}{w_t^{(i)}}
\;=\;
\hat{A}_t^{(i)} \;+\;
\frac{-w_t^{(i)}\hat{A}_t^{(i)}}{w_t^{(i)}}
\;=\;
\hat{A}_t^{(i)} - \hat{A}_t^{(i)}
\;=\; 0,
\]
where the cancellation uses $w_t^{(i)} > 0$ (since
$\pi_{\theta'}\!\left(a_t^{(i)} \mid s_t^{(i)}\right) > 0$ and
$\pi_\theta\!\left(a_t^{(i)} \mid s_t^{(i)}\right) > 0$ for any
sampled action). Therefore
\[
g_t^{(i)}\bigl(\mathcal{L}_{\mathrm{KL}}\bigr)
\;=\; 0
\;=\;
g_t^{(i)}\bigl(L_{\mathrm{CLIP}}\bigr).
\]

The per-sample contributions agree on every $(i,t)$, so the two
batch gradients agree:
\[
\nabla_{\theta'} L_{\mathrm{CLIP}}
\;=\;
\frac{1}{N}\sum_{i=1}^{N}\sum_{t=1}^{H} g_t^{(i)}\bigl(L_{\mathrm{CLIP}}\bigr)
\;=\;
\frac{1}{N}\sum_{i=1}^{N}\sum_{t=1}^{H} g_t^{(i)}\bigl(\mathcal{L}_{\mathrm{KL}}\bigr)
\;=\;
\nabla_{\theta'}\mathcal{L}_{\mathrm{KL}}.
\]
\end{proof}

\subsection{Interpretation}

Figure~\ref{fig:equivalence-table} reads the equivalence row by row:
for each combination of $w_t^{(i)}$ and $\hat{A}_t^{(i)}$, the
PPO-Clip gradient and the value of $\beta_t^{(i)}$ that reproduces it
under the KL surrogate are listed side by side.

\begin{figure}[H]
\centering
\begin{tikzpicture}[>=Stealth, scale=0.9, transform shape]

\node[font=\small\bfseries, text=berkeleyblue] at (0, 3.2) {Region};
\node[font=\small\bfseries, text=berkeleyblue] at (4.5, 3.2) {PPO-Clip gradient};
\node[font=\small\bfseries, text=berkeleyblue] at (9.5, 3.2) {Equivalent $\beta_t$};

\draw[berkeleyblue, line width=0.8pt] (-2.5, 2.8) -- (12.5, 2.8);

\node[font=\small, anchor=west] at (-2.4, 2.2) {$w_t^{(i)} \in [1\!-\!\epsilon,\, 1\!+\!\epsilon]$};
\node[font=\small] at (4.5, 2.2) {$\hat{A}_t^{(i)}\,\nabla w_t^{(i)}$};
\node[font=\small, text=green!50!black] at (9.5, 2.2) {$\beta_t = 0$};

\node[font=\small, anchor=west] at (-2.4, 1.4) {$w_t^{(i)} > 1\!+\!\epsilon$, $\hat{A}_t^{(i)} > 0$};
\node[font=\small, text=red!70!black] at (4.5, 1.4) {$0$ (killed)};
\node[font=\small] at (9.5, 1.4) {$\beta_t = -w_t^{(i)}\hat{A}_t^{(i)}$};

\node[font=\small, anchor=west] at (-2.4, 0.6) {$w_t^{(i)} > 1\!+\!\epsilon$, $\hat{A}_t^{(i)} < 0$};
\node[font=\small] at (4.5, 0.6) {$\hat{A}_t^{(i)}\,\nabla w_t^{(i)}$};
\node[font=\small, text=green!50!black] at (9.5, 0.6) {$\beta_t = 0$};

\node[font=\small, anchor=west] at (-2.4, -0.2) {$w_t^{(i)} < 1\!-\!\epsilon$, $\hat{A}_t^{(i)} > 0$};
\node[font=\small] at (4.5, -0.2) {$\hat{A}_t^{(i)}\,\nabla w_t^{(i)}$};
\node[font=\small, text=green!50!black] at (9.5, -0.2) {$\beta_t = 0$};

\node[font=\small, anchor=west] at (-2.4, -1.0) {$w_t^{(i)} < 1\!-\!\epsilon$, $\hat{A}_t^{(i)} < 0$};
\node[font=\small, text=red!70!black] at (4.5, -1.0) {$0$ (killed)};
\node[font=\small] at (9.5, -1.0) {$\beta_t = -w_t^{(i)}\hat{A}_t^{(i)}$};

\draw[berkeleyblue, line width=0.4pt] (-2.5, 1.8) -- (12.5, 1.8);
\draw[berkeleyblue, line width=0.4pt] (-2.5, 1.0) -- (12.5, 1.0);
\draw[berkeleyblue, line width=0.4pt] (-2.5, 0.2) -- (12.5, 0.2);
\draw[berkeleyblue, line width=0.4pt] (-2.5, -0.6) -- (12.5, -0.6);

\end{tikzpicture}
\caption{Summary of the per-sample equivalence. In the green rows the PPO-Clip gradient is reproduced by a PPO-KL surrogate with $\beta_t = 0$. In the red rows the PPO-Clip gradient is reproduced by a PPO-KL surrogate with $\beta_t = -w_t \hat{A}_t$, which is the value that kills the corresponding term of the per-sample gradient.}
\label{fig:equivalence-table}
\end{figure}

The coefficient \eqref{eq:beta-def} makes explicit what PPO-Clip
implicitly applies. The clip is a Kullback--Leibler penalty whose
strength is zero everywhere except on the killed region, where it
takes the value $-w_t^{(i)} \hat{A}_t^{(i)}$. The sign of the
coefficient is consistent with the trust-region intuition. When
$\hat{A}_t^{(i)} > 0$ and $w_t^{(i)} > 1+\epsilon$, the policy is
already over-weighting a beneficial action, and a negative
$\beta_t^{(i)}$ pulls $\log \pi_{\theta'}$ away from the current
direction; when $\hat{A}_t^{(i)} < 0$ and $w_t^{(i)} < 1-\epsilon$,
the policy is already under-weighting a harmful action, and a
positive $\beta_t^{(i)}$ stabilises it. In both cases the effective
advantage of the bracket in \eqref{eq:kl-grad} is driven exactly to
zero, reproducing the clip's behaviour.

The identity in Theorem~\ref{thm:identity} is stronger than the
first-order observation of \citet{schulman2017ppo} that
$L_{\mathrm{CLIP}}$ and the unclipped surrogate agree around
$w = 1$. It holds at every $\theta'$ off the clip boundary, every minibatch
step, and across the entire inner loop. Two closely related lines of recent
work have approached the clip and the KL term from related but
distinct angles: \citet{yao2025rethinkingkl} establish a gradient
equivalence between three KL estimators inside RLHF objectives, and
\citet{zhang2026rpg} propose a unified design framework for
KL-regularised policy gradient. Neither identifies the per-sample
coefficient \eqref{eq:beta-def} that turns PPO-Clip itself into a
KL penalty.

\section{Experiments}
\label{sec:experiments}

\subsection{Setup}
\label{sec:exp-setup}

We evaluate four objectives that share the surrogate of
Section~\ref{sec:background} and differ only in the policy loss. These
are PPO-Clip, fixed-$\beta$ PPO-KL, adaptive-$\beta$ PPO-KL, and the
per-sample PPO-KL of \eqref{eq:beta-def}; the first three are the
variants of \citet{schulman2017ppo} and the fourth is the construction
of Section~\ref{sec:theorem}. We train each on the five MuJoCo locomotion
tasks \citep{todorov2012mujoco} HalfCheetah-v4, Hopper-v4, Walker2d-v4,
Ant-v4, and Humanoid-v4 for $10^6$ environment steps over five seeds, and
include CartPole-v1 and LunarLander-v3 as a low-dimensional and a
discrete-action check.

All four variants share the trainer, the rollout collector, and the
value head, and use the standard PPO configuration of CleanRL
\citep{huang2022cleanrl} and Stable-Baselines3 \citep{raffin2021sb3}.
This configuration is a two-hidden-layer ($64$-$64$, $\tanh$)
actor-critic with orthogonal initialisation and a diagonal Gaussian
policy, GAE \citep{schulman2016gae} with $\gamma = 0.99$ and
$\lambda = 0.95$, Adam at $3 \cdot 10^{-4}$ with linear annealing,
gradient clipping at norm $0.5$, value-loss clipping at $0.2$,
observation and reward normalisation, and an inner loop of $K = 10$
epochs over size-$64$ minibatches of each $2048$-step rollout.

The variants differ in the trust-region knob. PPO-Clip uses
$\epsilon = 0.2$, fixed-$\beta$ PPO-KL uses $\beta = 1$, and
adaptive-$\beta$ PPO-KL follows \citet{schulman2017ppo}, updating $\beta$
once per rollout by a factor of two toward a target
$D_{\mathrm{KL}} = 0.02$, the second-order KL $\epsilon^2/2$ of the clip
radius. The fixed and adaptive variants penalise the analytic Gaussian
KL, while the per-sample variant penalises the sampled log-ratio
$-\log w_t^{(i)}$, the estimator for which Theorem~\ref{thm:identity}
holds. A knob sweep on CartPole-v1, HalfCheetah-v4, and Hopper-v4 over
$\epsilon \in \{0.1, 0.2, 0.3\}$, $\beta \in \{0.1, 0.3, 1, 3, 10\}$, and
$D_{\mathrm{KL}} \in \{0.003, 0.01, 0.02, 0.03, 0.1\}$ fixes each baseline
at its best value.

All public run histories and per-run reproducibility artifacts are available in
the Weights \& Biases project at \url{https://wandb.ai/KLip-PPO/KLip-PPO}.

\subsection{Results}
\label{sec:exp-results}

By Theorem~\ref{thm:identity}, PPO-Clip and the per-sample PPO-KL
surrogate share a gradient at every step. Their learning curves coincide
on all five MuJoCo tasks (Figure~\ref{fig:equivalence}), and their final
returns agree on every task (Table~\ref{tab:final-returns}). \citet{schulman2017ppo} showed
only that the clipped objective and the unclipped surrogate ($\beta = 0$)
agree to first order when the new policy is close to the one that collected
the rollout. The per-sample identity is
sharper, because with the coefficient $\beta_t$ of \eqref{eq:beta-def} in
place the equality becomes exact and holds at every $\theta'$, which is why
the curves stay together over the whole run, however far training moves the
policy.

The literature has consistently found clipping to outperform a KL
penalty on continuous control. The original PPO study reports this on
MuJoCo \citep{schulman2017ppo}, and subsequent benchmarks repeat it
\citep{engstrom2020implementation, andrychowicz2021matters}. Our
experiments reproduce the same ordering. PPO-KL with a fixed or an
adaptively tuned $\beta$ matches PPO-Clip on the easier tasks but falls
behind on the high-dimensional ones (Ant-v4 and Humanoid-v4), where the
policy must travel far from its initialisation and the trust region does
real work.

The per-sample identity explains the shortfall. Clipping constrains each
sample on its own terms, turning the penalty on only for the transitions
whose ratio has left the band and scaling it by that sample's ratio and
advantage. The scalar $\beta$ of PPO-KL, fixed or adaptive, instead
applies one value to every sample, so a $\beta$ large enough to restrain
the few runaway transitions
over-penalises the many well-behaved ones, and no single value reproduces
what the clip does pointwise \citep{hsu2020revisiting}. The per-sample coefficient of
\eqref{eq:beta-def} removes that limitation, setting the penalty separately
for each sample, and so reproduces the clip exactly.

\begin{figure}[tbp]
  \centering
  \begin{subfigure}[t]{0.32\textwidth}
    \includegraphics[width=\textwidth]{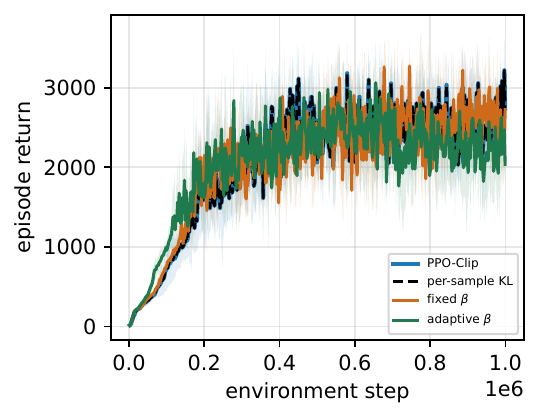}
    \caption{Hopper-v4}
  \end{subfigure}
  \hfill
  \begin{subfigure}[t]{0.32\textwidth}
    \includegraphics[width=\textwidth]{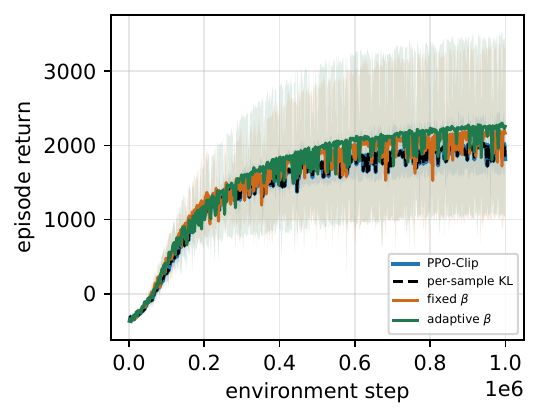}
    \caption{HalfCheetah-v4}
  \end{subfigure}
  \hfill
  \begin{subfigure}[t]{0.32\textwidth}
    \includegraphics[width=\textwidth]{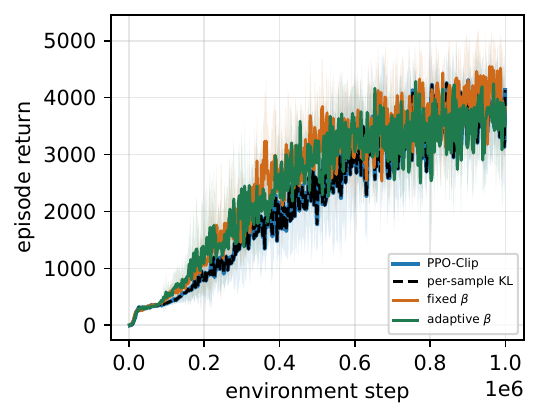}
    \caption{Walker2d-v4}
  \end{subfigure}

  \vspace{4pt}

  \begin{subfigure}[t]{0.32\textwidth}
    \includegraphics[width=\textwidth]{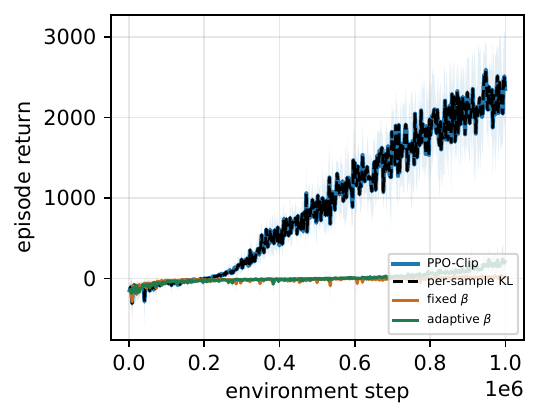}
    \caption{Ant-v4}
  \end{subfigure}
  \hspace{0.03\textwidth}
  \begin{subfigure}[t]{0.32\textwidth}
    \includegraphics[width=\textwidth]{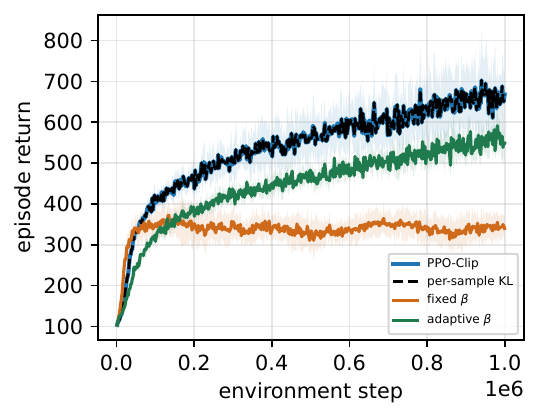}
    \caption{Humanoid-v4}
  \end{subfigure}
  \caption{Episode return on the five MuJoCo tasks (mean
  over $5$ seeds, $\pm$ std band). PPO-Clip and per-sample PPO-KL are
  indistinguishable on every task, while the fixed- and adaptive-$\beta$
  penalties fall behind on Ant-v4 (d) and Humanoid-v4 (e).}
  \label{fig:equivalence}
\end{figure}

\begin{table}[tbp]
  \centering
  \begin{tabular}{l rrrr}
\toprule
Task & PPO-Clip & Per-sample & Fixed $\beta$ & Adaptive $\beta$ \\
\midrule
Ant & $2193 \pm 361$ & $2193 \pm 361$ & $16 \pm 16$ & $159 \pm 86$ \\
CartPole & $478 \pm 18$ & $478 \pm 18$ & $484 \pm 23$ & $492 \pm 9$ \\
HalfCheetah & $1955 \pm 312$ & $1955 \pm 312$ & $2111 \pm 1066$ & $2207 \pm 1117$ \\
Hopper & $2598 \pm 278$ & $2598 \pm 278$ & $2616 \pm 208$ & $2236 \pm 365$ \\
Humanoid & $660 \pm 70$ & $660 \pm 70$ & $342 \pm 31$ & $553 \pm 22$ \\
LunarLander & $107 \pm 8$ & $107 \pm 8$ & $121 \pm 12$ & $125 \pm 20$ \\
Walker2d & $3717 \pm 425$ & $3717 \pm 425$ & $3995 \pm 402$ & $3639 \pm 374$ \\
\bottomrule
\end{tabular}

  \caption{Final return (mean $\pm$ std over $5$ seeds, last $10\%$ of
  training). PPO-Clip and per-sample PPO-KL are identical on every task; the
  scalar-$\beta$ variants fall behind on the high-dimensional tasks.}
  \label{tab:final-returns}
\end{table}

\section{Discussion}
\label{sec:conclusion}

Theorem~\ref{thm:identity} makes PPO-Clip's per-sample gradient exactly
the gradient of a Kullback--Leibler penalty,
\[
\nabla_{\theta'} L_{\mathrm{CLIP}} = \nabla_{\theta'}\,\mathbb{E}_t\!\bigl[\,w_t \hat{A}_t + \beta_t \log \pi_{\theta'}(a_t\mid s_t)\,\bigr],
\qquad
\beta_t = -\,w_t \hat{A}_t \,\indicator\!\left[(i,t)\in\mathcal{I}_{\mathrm{kill}}\right],
\]
whose coefficient is set for each sample by its own importance ratio and
advantage. The clip is in this sense a trust region acting in the space of
policy distributions, applied to the samples in
$\mathcal{I}_{\mathrm{kill}}$ that it would otherwise discard. We show in
Appendix~\ref{app:phi} that the same identity has a matching form in
importance-weight space, so the clipped objective~\eqref{eq:clip-obj}, the
weight-space penalty, and the per-sample KL penalty are \textbf{three
equivalent expressions of a single per-sample gradient}.

This reading has two consequences. First, it makes precise a question the
field has answered only empirically. Clipping and KL penalisation have been
compared through benchmark scores \citep{schulman2017ppo,
engstrom2020implementation, ilyas2020closer, andrychowicz2021matters,
hsu2020revisiting, sun2022espo}, with no theory relating the two
objectives. The identity reframes that comparison. Since clipping is itself a KL
penalty, the methods here differ only in how they set the penalty's
coefficient. PPO-KL uses one scalar $\beta$ for the whole batch, while
PPO-Clip, by Theorem~\ref{thm:identity}, uses the per-sample $\beta_t$ of
\eqref{eq:beta-def}. The benchmark gap read as clipping versus KL is thus a
gap between a scalar and a per-sample coefficient, which is where our
experiments locate it.
Second, it gives the penalty a \textbf{flexibility} the clipped form
hides. Once $\beta_t$ is written out, its step shape is one choice among
many, and replacing it with another, such as one that softens the boundary
or that varies with the individual sample, stays within the same surrogate
family and defines new algorithms (Section~\ref{sec:future}).

What governs the update is therefore the per-sample gradient coefficient
and not the surface choice between a clip and a KL term, a principle that
recent analyses of KL regularisation in language-model training
independently corroborate. \citet{yao2025rethinkingkl} find that a KL term
written as a loss and its associated per-sample reward coefficient induce
the same gradient, and \citet{zhang2026rpg} report that clipped and KL
objectives agree once their per-sample importance weights are aligned.
Neither isolates the exact coefficient $\beta_t = -w_t\hat{A}_t$ that makes
PPO-Clip itself a KL penalty, which is the identity established here; that
their gradient-level findings point the same way is evidence that the
per-sample view this identity rests on is the correct one.

\section{Future work}
\label{sec:future}

The per-sample reformulation turns the implicit penalty coefficient
$\beta_t^{(i)}$ of PPO-Clip into an explicit object. Its functional
form, a step function on the trust-region boundary in the present
case, can be modified by design without leaving the surrogate-loss
family of Section~\ref{sec:background}. We outline five extensions
that follow from making different choices for $\beta_t^{(i)}$. Each
defines a new policy-optimisation algorithm and is left as the
subject of a separate study.

\paragraph{Soft relaxations of the boundary.}
The coefficient defined in \eqref{eq:beta-def} is discontinuous at
$w_t^{(i)} = 1 \pm \epsilon$. Several proposals in the literature
soften this discontinuity from different angles. Trust Region-Guided
PPO \citep{wang2019trgppo} keeps the hard clip but lets its width
depend on the local KL of the policy. Truly PPO
\citep{wang2020trulyppo} keeps the clip and adds a rollback term
that drags the policy back when the ratio exits the trust region.
ESPO \citep{sun2022espo} removes ratio clipping altogether and
controls the inner loop by early stopping. Simple Policy
Optimization \citep{xie2025spo} substitutes clipping with a
regulariser on the ratio that admits a tighter trust region.
Probability Smoothing Policy Optimisation \citep{dwyer2025pspo}, in
the language-model setting, replaces the hard ratio with a soft
mixture of old and new policies so that the gradient is non-zero
everywhere.

The per-sample form unifies these proposals under one quantity: each
is a choice of $\beta_t^{(i)}$ in the template
$\mathbb{E}_t[\,w_t \hat{A}_t + \beta_t \log\pi_{\theta'}(a_t \mid
s_t)\,]$, with the soft variants replacing the step of
\eqref{eq:beta-def} by a continuous shape that agrees with it far
inside and far outside the trust region. The linear ramp of width
$\delta \ge 0$, which interpolates between $0$ and
$-w_t^{(i)}\hat{A}_t^{(i)}$ across the boundary, is one explicit
member: it recovers the PPO-Clip step as $\delta \to 0$ and the
unconstrained surrogate as $\delta \to \infty$. The right shape, as a
function of the task and of the inner-loop epoch count $K$, is left as
an empirical question.

\paragraph{Position-aware coefficient for sequence models.}
On token-level applications such as language-model fine-tuning, each
sample corresponds to one token of a generated completion. The
per-sample coefficient $\beta_t^{(i)}$ can then be allowed to depend
on the token's position within the sequence. A coefficient that is
sharper near the answer span and softer in the reasoning prefix, for
instance, is a specific position-conditioned $\beta_t^{(i)}$ and is
not expressible in standard PPO-Clip, which uses the same
trust-region radius for every token. The construction and empirical
study of position-aware variants is a direct use of the framework
and is left to follow-up work.

\paragraph{Age-conditioned coefficient for off-policy learning.}
The trust-region argument behind PPO-Clip assumes that the rollout
was sampled under a behaviour policy close to the current one. With
a replay buffer this assumption fails. The importance ratio
$w_t^{(i)}$ on a sample drawn from an older policy can be
arbitrarily large, and PPO-Clip discards all such samples by the
step coefficient. Letting $\beta_t^{(i)}$ depend on the age of the
sample at the time of the update, so that older samples are weighted
differently from fresh ones, defines an off-policy variant of the
algorithm whose properties can be analysed within the same
formulation. The appropriate functional form of the age dependence,
and its interaction with the bias-variance tradeoff of off-policy
estimation, is an open question.

\paragraph{Asymmetric trust regions.}
The coefficient \eqref{eq:beta-def} is symmetric under the swap
$(w_t^{(i)} - 1) \mapsto -(w_t^{(i)} - 1)$, in the sense that the
same step shape applies on both sides of the trust region. An
asymmetric variant softens the coefficient on the side that pulls
the policy back toward the rollout distribution and keeps it sharp
on the side that pushes it away. This modification is a single
change to the per-sample form and is not expressible inside the
original $\min$ formulation of \citet{schulman2017ppo}. Its analysis
is a natural follow-up.

\paragraph{A unified per-sample template.}
The per-sample form is not specific to PPO-Clip. The template
\begin{equation}
\mathcal{L}_{\mathrm{tmpl}}(\theta')
\;=\;
\mathbb{E}_t\!\bigl[\,w_t \hat{A}_t + \beta_t \log\pi_{\theta'}(a_t \mid s_t)\,\bigr]
\label{eq:template}
\end{equation}
recovers several existing on-policy algorithms once the dependence of
$\beta_t^{(i)}$ on the sample is specified, and Table~\ref{tab:template-instances}
summarises this view. The unconstrained importance-sampled surrogate
is the case $\beta_t \equiv 0$. PPO-KL with a fixed scalar
\citep{schulman2017ppo} corresponds to $\beta_t \equiv \beta \in
\mathbb{R}$. Adaptive PPO-KL \citep{schulman2017ppo} replaces the
constant by a quantity $\beta(t)$ that is updated once per rollout
according to the measured KL. PPO-Clip is the per-sample step
identified in Theorem~\ref{thm:identity}. Token-level PPO-Clip and
GRPO \citep{shao2024deepseekmath, deepseekai2025r1} apply the same
step independently to each token of a generated sequence. The
directions in this section correspond to making different choices of
$\beta_t^{(i)}$ within the same template.

\begin{table}[h]
\centering
\footnotesize
\renewcommand{\arraystretch}{1.35}
\resizebox{\textwidth}{!}{%
\begin{tabular}{@{} l l l l @{}}
\toprule
\textbf{Algorithm} & $\boldsymbol{\beta_t^{(i)}}$ \textbf{form} & \textbf{Depends on} & \textbf{Reference} \\
\midrule
Unconstrained surrogate & $0$ & --- & --- \\
PPO-KL (fixed) & $\beta \in \mathbb{R}$ & none & \citet{schulman2017ppo} \\
PPO-KL (adaptive) & $\beta(t) \in \mathbb{R}$ & training time & \citet{schulman2017ppo} \\
PPO-Clip & $-w_t \hat{A}_t \cdot \indicator_{\mathcal{I}_{\mathrm{kill}}}$ & $(w, \hat{A})$ & \citet{schulman2017ppo} \\
Soft-clip (linear ramp) & $-w_t \hat{A}_t \cdot g_\delta(w, \hat{A})$ & $(w, \hat{A})$, $\delta$ & this paper, Sec.~\ref{sec:future} \\
Token-level / GRPO & step shape per token & token $(w, \hat{A})$ & \citet{shao2024deepseekmath, deepseekai2025r1} \\
Position-aware & per-token shape, conditioned on position & position $+ (w, \hat{A})$ & future \\
Off-policy & per-sample shape, conditioned on age & age $+ (w, \hat{A})$ & future \\
Asymmetric & non-symmetric in $\mathrm{sign}(w-1)$ & $(w, \hat{A})$ & future \\
\bottomrule
\end{tabular}%
}
\caption{Instances of the per-sample template
\eqref{eq:template}. Each row is a particular choice of the
dependence of $\beta_t^{(i)}$ on the sample; the rest of the loss
is shared.}
\label{tab:template-instances}
\end{table}

Several research questions follow from the per-sample template. The
published algorithms in the upper part of the table can be trained
under identical pipelines and compared on the same metrics, which
isolates the effect of the shape of $\beta_t^{(i)}$ from the
implementation choices that the literature has shown matter strongly
\citep{engstrom2020implementation, andrychowicz2021matters}. The
soft, position-aware, age-conditioned and asymmetric directions
described in the previous paragraphs enrich the arguments of
$\beta_t^{(i)}$, and the corresponding algorithms can be implemented
and evaluated inside the same pipeline. On the theory side, the
boundedness and monotone-improvement guarantees available for fixed
and scheduled scalars in the original PPO analysis
\citep{kakade2002approximately, schulman2015trpo} have direct
analogues at the per-sample level, and characterising which shapes
of $\beta_t^{(i)}$ preserve them is an open problem. The template
also crosses domains: the same form covers MuJoCo locomotion,
language-model fine-tuning
\citep{ouyang2022instructgpt, shao2024deepseekmath} and off-policy
regimes, and $\beta_t^{(i)}$ is the shared design variable across
the three.

\small
\bibliographystyle{plainnat}
\bibliography{references}
\normalsize

\clearpage


\appendix
\raggedbottom
\makeatletter
\setlength{\@fptop}{0pt}
\makeatother

\section{Weight-space dual: PPO-Clip as a $\Phi$ penalty}
\label{app:phi}

The per-sample $\beta_t$ identity of the main text places PPO-Clip in
distribution space: the penalty is $\beta_t \log \pi_{\theta'}(a_t \mid
s_t)$, a per-sample KL contribution. PPO-Clip also admits a dual
formulation in weight space, where the penalty acts directly on the
deviation of $w_t$ from the trust region $[1-\epsilon,\, 1+\epsilon]$.

\begin{theorem}[Weight-space form of PPO-Clip]
\label{thm:phi}
The PPO-Clip objective can be written as
\[
L_{\mathrm{CLIP}}(\theta')
\;=\;
\frac{1}{N}\sum_{i=1}^{N}\sum_{t=1}^{H}
  \Bigl[\,w_t^{(i)}\hat{A}_t^{(i)} \;-\; \Phi\!\left(w_t^{(i)}, \hat{A}_t^{(i)}\right)\Bigr]
\]
with the per-sample weight-space penalty
\[
\Phi(w, \hat{A})
\;=\;
\begin{cases}
\bigl(w - (1+\epsilon)\bigr)\,\hat{A} & \text{if } w > 1+\epsilon \text{ and } \hat{A} > 0, \\[3pt]
\bigl(w - (1-\epsilon)\bigr)\,\hat{A} & \text{if } w < 1-\epsilon \text{ and } \hat{A} < 0, \\[3pt]
0 & \text{otherwise.}
\end{cases}
\]
\end{theorem}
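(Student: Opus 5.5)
The identity is pointwise and holds sample by sample, so the plan is to prove the scalar identity
\[
\min\bigl\{w\hat{A},\ \mathrm{clip}(w,1-\epsilon,1+\epsilon)\,\hat{A}\bigr\} \;=\; w\hat{A} - \Phi(w,\hat{A})
\]
for every $w>0$ and every real $\hat{A}$. Summing over $(i,t)$ and multiplying by $1/N$ then gives the claimed form of $L_{\mathrm{CLIP}}$ directly from \eqref{eq:clip-obj}. No differentiability assumption is needed, since this is an identity of values and not of gradients. It should therefore hold everywhere, including on the clip boundary that Theorem~\ref{thm:identity} had to exclude.

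For the scalar identity I would reuse the three-way partition $\mathcal{I}_{\mathrm{in}}, \mathcal{I}_{\mathrm{kill}}, \mathcal{I}_{\mathrm{pass}}$ of Section~\ref{sec:theorem}, whose case analysis already computed which argument of the $\min$ is active.

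\emph{On $\mathcal{I}_{\mathrm{in}}$:} either $w\in[1-\epsilon,1+\epsilon]$, so the clip is the identity and both arguments equal $w\hat{A}$, or $\hat{A}=0$, so both arguments vanish. In both subcases $\Phi=0$ by the ``otherwise'' branch, which matches.

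\emph{On $\mathcal{I}_{\mathrm{pass}}$:} the earlier analysis shows the $\min$ selects the unclipped term $w\hat{A}$. Since $\Phi$ is again $0$ there, this matches.

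\emph{On $\mathcal{I}_{\mathrm{kill}}$:} the $\min$ selects the clipped constant.
\begin{itemize}
\item If $w>1+\epsilon$ and $\hat{A}>0$, the value is $(1+\epsilon)\hat{A} = w\hat{A} - (w-(1+\epsilon))\hat{A}$, which is the first branch of $\Phi$.
\item If $w<1-\epsilon$ and $\hat{A}<0$, the value is $(1-\epsilon)\hat{A} = w\hat{A} - (w-(1-\epsilon))\hat{A}$, which is the second branch.
\end{itemize}

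The step that needs care is checking that the ``otherwise'' branch of $\Phi$ covers exactly $\mathcal{I}_{\mathrm{in}}\cup\mathcal{I}_{\mathrm{pass}}$ and that the two nonzero branches are exactly the two halves of $\mathcal{I}_{\mathrm{kill}}$. This needs the three sets to partition all pairs $(w,\hat{A})$, which the paper asserts. One overlap deserves a remark: a sample with $\hat{A}=0$ lies in $\mathcal{I}_{\mathrm{in}}$ even when $w$ is outside the band, and there $\Phi=0$ because the nonzero branches require strict signs on $\hat{A}$. Boundary points $w=1\pm\epsilon$ fall in $\mathcal{I}_{\mathrm{in}}$ with $\Phi=0$, consistent with the strict inequalities in $\Phi$, and at these points the two kill-branch formulas would also give $0$, so $\Phi$ is continuous there. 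Once this bookkeeping is confirmed, the theorem follows by summation.

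Finally, I would note as a consistency check that differentiating $\Phi$ gives $\hat{A}\nabla w$ on $\mathcal{I}_{\mathrm{kill}}$ and $0$ elsewhere. This reproduces \eqref{eq:clip-grad} and hence links the result to Theorem~\ref{thm:identity}.
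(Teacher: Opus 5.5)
Your proposal is correct and takes the same route as the paper's proof. Both prove the scalar identity $\min\{w\hat{A},\mathrm{clip}(w)\hat{A}\} = w\hat{A}-\Phi(w,\hat{A})$ by cases (the two halves of $\mathcal{I}_{\mathrm{kill}}$, then the complementary region, which you split into $\mathcal{I}_{\mathrm{in}}$ and $\mathcal{I}_{\mathrm{pass}}$) and then sum over $(i,t)$. Your extra checks, that the three sets cover all pairs $(w,\hat{A})$ and what happens at $\hat{A}=0$ and $w=1\pm\epsilon$, are sound bookkeeping that the paper leaves implicit.
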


\begin{proof}
Fix a sample $(i, t)$ and write $\ell_t^{(i)} =
\min\!\bigl(w_t^{(i)}\hat{A}_t^{(i)},\, \mathrm{clip}(w_t^{(i)})
\hat{A}_t^{(i)}\bigr)$ for its PPO-Clip per-sample contribution.

\emph{Case $w_t^{(i)} > 1+\epsilon$ and $\hat{A}_t^{(i)} > 0$.} The
clipped weight is $1+\epsilon$ and the clipped product is smaller than
the unclipped one, so
\[
\ell_t^{(i)} \;=\; (1+\epsilon)\hat{A}_t^{(i)}
\;=\; w_t^{(i)}\hat{A}_t^{(i)} - \bigl(w_t^{(i)} - (1+\epsilon)\bigr)\hat{A}_t^{(i)}
\;=\; w_t^{(i)}\hat{A}_t^{(i)} - \Phi(w_t^{(i)}, \hat{A}_t^{(i)}).
\]

\emph{Case $w_t^{(i)} < 1-\epsilon$ and $\hat{A}_t^{(i)} < 0$.} The
clipped weight is $1-\epsilon$ and again the clipped product is the
smaller, so
\[
\ell_t^{(i)} \;=\; (1-\epsilon)\hat{A}_t^{(i)}
\;=\; w_t^{(i)}\hat{A}_t^{(i)} - \bigl(w_t^{(i)} - (1-\epsilon)\bigr)\hat{A}_t^{(i)}
\;=\; w_t^{(i)}\hat{A}_t^{(i)} - \Phi(w_t^{(i)}, \hat{A}_t^{(i)}).
\]

\emph{Otherwise.} The unclipped term is the minimum, so $\ell_t^{(i)}
= w_t^{(i)}\hat{A}_t^{(i)} = w_t^{(i)}\hat{A}_t^{(i)} - 0$, and the
definition of $\Phi$ gives $\Phi(w_t^{(i)}, \hat{A}_t^{(i)}) = 0$.

Summing over $(i, t)$ yields the result.
\end{proof}

The penalty $\Phi$ is non-negative on $\mathcal{I}_{\mathrm{kill}}$:
when $w > 1+\epsilon$ and $\hat{A} > 0$ the factor $w - (1+\epsilon)$
is positive and so is $\hat{A}$; when $w < 1-\epsilon$ and $\hat{A} <
0$ both factors are negative and their product is positive again.
PPO-Clip therefore subtracts a non-negative penalty from the
unconstrained surrogate, proportional to how far $w_t^{(i)}$ exceeds
the trust-region boundary, on exactly the samples in
$\mathcal{I}_{\mathrm{kill}}$.

Combining the weight-space identity above with the per-sample
$\beta_t$ gradient identity of the main text, PPO-Clip admits three
forms on every minibatch, the first two equal in value and the third
equal in gradient:
\begin{enumerate}
\item the $\min$ formulation of \citet{schulman2017ppo}:
\[
L_{\mathrm{CLIP}} \;=\; \mathbb{E}_t\!\bigl[\min\!\bigl(w_t \hat{A}_t,\, \mathrm{clip}(w_t)\hat{A}_t\bigr)\bigr];
\]
\item the weight-space form of Theorem~\ref{thm:phi}:
\[
L_{\mathrm{CLIP}} \;=\; \mathbb{E}_t\!\bigl[\,w_t \hat{A}_t \;-\; \Phi(w_t, \hat{A}_t)\bigr],
\]
with penalty acting on $|w_t - 1|$ in importance-weight space;
\item the per-sample KL form of the main theorem, which matches
PPO-Clip in gradient:
\[
\nabla_{\theta'} L_{\mathrm{CLIP}} \;=\; \nabla_{\theta'}\,\mathbb{E}_t\!\bigl[\,w_t \hat{A}_t \;+\; \beta_t \log \pi_{\theta'}(a_t \mid s_t)\bigr],
\qquad \beta_t = -w_t \hat{A}_t \cdot \indicator\!\left[(i,t) \in \mathcal{I}_{\mathrm{kill}}\right],
\]
with penalty acting in distribution space.
\end{enumerate}
The first two forms are equal as functions and differ only in surface
notation; the third matches them in per-sample gradient and places
PPO-Clip inside the PPO-KL family. The space in which the
trust region is expressed (importance-weight space for $\Phi$,
distribution space for $\beta_t$) changes the notation but not the
per-sample gradient, which is the same in all three forms.

\begin{table}[h]
\centering
\small
\renewcommand{\arraystretch}{1.5}
\begin{tabular}{@{} l l l l @{}}
\toprule
\textbf{Form} & \textbf{Per-sample loss term} & \textbf{Penalty acts in} & \textbf{Reference} \\
\midrule
$\min$ & $\min\!\bigl(w_t\hat{A}_t,\, \mathrm{clip}(w_t)\hat{A}_t\bigr)$ & --- & \cite{schulman2017ppo} \\
$\Phi$ & $w_t\hat{A}_t - \Phi(w_t, \hat{A}_t)$ & importance-weight space & Theorem~\ref{thm:phi} \\
$\beta_t$ & $w_t\hat{A}_t + \beta_t \log\pi_{\theta'}(a_t \mid s_t)$ & distribution space & main theorem \\
\bottomrule
\end{tabular}
\caption{Three forms of the PPO-Clip per-sample loss. All three produce the same per-sample gradient on every $(i, t)$.}
\label{tab:three-forms}
\end{table}

\clearpage
\section{Supplementary Figures}
\label{app:figures}

This appendix collects supplementary figures that illustrate the
geometry of PPO-Clip and PPO-KL and the per-sample equivalence between
them. The notation follows the main text, with $w_t$ the importance
ratio, $\hat{A}_t$ the GAE advantage estimate, and
$\mathcal{I}_{\text{in}}, \mathcal{I}_{\text{kill}},
\mathcal{I}_{\text{pass}}$ the partition of the minibatch.

\subsection{The clipping function}

\begin{figure}[H]
\centering
\begin{tikzpicture}[>=Stealth, scale=0.85, transform shape]

\fill[green!8] (2.0, -0.3) rectangle (5.0, 4.5);

\draw[->, line width=0.6pt] (-0.5, 0) -- (7.5, 0) node[right, font=\scriptsize] {$w = \dfrac{\pi_{\theta'}}{{\color{blue}\pi_\theta}}$};
\draw[->, line width=0.6pt] (0, -0.5) -- (0, 4.8) node[above, font=\scriptsize] {$w_c$};

\draw[gray!30, line width=0.5pt, densely dotted] (0, 0) -- (6.3, 4.4);

\draw[berkeleyblue, line width=2.0pt]
    (0.2, 1.4) -- (2.0, 1.4)
    -- (5.0, 3.5)
    -- (7.0, 3.5);

\draw (2.0, -0.1) -- (2.0, 0.1);
\node[font=\tiny, below] at (2.0, -0.2) {$1\!-\!\epsilon$};
\draw (3.5, -0.1) -- (3.5, 0.1);
\node[font=\tiny, below] at (3.5, -0.2) {$1$};
\draw (5.0, -0.1) -- (5.0, 0.1);
\node[font=\tiny, below] at (5.0, -0.2) {$1\!+\!\epsilon$};

\draw (-0.1, 1.4) -- (0.1, 1.4);
\node[font=\tiny, left] at (-0.15, 1.4) {$1\!-\!\epsilon$};
\draw (-0.1, 2.45) -- (0.1, 2.45);
\node[font=\tiny, left] at (-0.15, 2.45) {$1$};
\draw (-0.1, 3.5) -- (0.1, 3.5);
\node[font=\tiny, left] at (-0.15, 3.5) {$1\!+\!\epsilon$};

\draw[gray, line width=0.3pt, densely dotted] (2.0, 0) -- (2.0, 1.4);
\draw[gray, line width=0.3pt, densely dotted] (5.0, 0) -- (5.0, 3.5);
\draw[gray, line width=0.3pt, densely dotted] (0, 1.4) -- (2.0, 1.4);
\draw[gray, line width=0.3pt, densely dotted] (0, 3.5) -- (5.0, 3.5);

\node[font=\tiny, text=green!50!black, font=\tiny\bfseries] at (3.5, 4.2) {gradient active};
\node[font=\tiny, text=red!70!black] at (1.0, 0.7) {$\nabla_{\theta'} = 0$};
\node[font=\tiny, text=red!70!black] at (6.2, 4.0) {$\nabla_{\theta'} = 0$};

\draw[decorate, decoration={brace, amplitude=4pt, mirror}, line width=0.5pt] (2.0, -0.6) -- (5.0, -0.6);
\node[font=\tiny, below] at (3.5, -1.0) {weights free to vary};

\end{tikzpicture}
\caption{The clipping function $w_c = \mathrm{clip}(w, 1-\epsilon, 1+\epsilon)$. Inside the band $[1-\epsilon, 1+\epsilon]$ the clipped weight equals the true importance ratio and the gradient flows normally; outside the band the clipped weight is constant and the gradient with respect to $\theta'$ vanishes.}
\label{fig:app-clipping-function}
\end{figure}

\subsection{The PPO-Clip surrogate}

\begin{figure}[H]
\centering
\begin{minipage}[t]{0.48\textwidth}
\centering
\begin{tikzpicture}[>=Stealth, scale=0.72, transform shape]

\node[font=\small\bfseries, text=green!50!black] at (4, 5.8) {Positive advantage ($\hat{A} > 0$)};

\draw[->, line width=0.6pt] (-0.3, 0) -- (8.5, 0) node[right, font=\scriptsize] {$w$};
\draw[->, line width=0.6pt] (0, -0.5) -- (0, 5.5) node[above, font=\scriptsize] {objective};

\draw (2.5, -0.1) -- (2.5, 0.1);
\node[font=\tiny, below] at (2.5, -0.15) {$1\!-\!\epsilon$};
\draw (4, -0.1) -- (4, 0.1);
\node[font=\tiny, below] at (4, -0.15) {$1$};
\draw (5.5, -0.1) -- (5.5, 0.1);
\node[font=\tiny, below] at (5.5, -0.15) {$1\!+\!\epsilon$};

\draw[gray, line width=0.3pt, densely dotted] (2.5, 0) -- (2.5, 5.2);
\draw[gray, line width=0.3pt, densely dotted] (5.5, 0) -- (5.5, 5.2);

\draw[red!70!black, line width=0.9pt, dashed] (0.5, 0.3) -- (8, 5.0);
\node[font=\tiny, text=red!70!black] at (7.0, 5.3) {$w\hat{A}$};

\draw[berkeleyblue, line width=0.9pt]
    (0.5, 1.55) -- (2.5, 1.55) -- (5.5, 3.43) -- (8, 3.43);
\node[font=\tiny, text=berkeleyblue] at (1.0, 1.9) {$w_c\hat{A}$};

\draw[green!50!black, line width=2.2pt, opacity=0.45]
    (0.5, 0.3) -- (5.5, 3.43) -- (8, 3.43);
\node[font=\tiny, text=green!50!black, fill=white, inner sep=1pt] at (7.0, 2.7) {$\min$ (PPO)};

\draw[<->, berkeleyblue, line width=0.5pt] (7.5, 3.53) -- (7.5, 4.7);
\node[font=\tiny, text=berkeleyblue, right] at (7.55, 4.15) {bounded};

\end{tikzpicture}
\end{minipage}
\hfill
\begin{minipage}[t]{0.48\textwidth}
\centering
\begin{tikzpicture}[>=Stealth, scale=0.72, transform shape]

\node[font=\small\bfseries, text=red!70!black] at (4, 5.8) {Negative advantage ($\hat{A} < 0$)};

\draw[->, line width=0.6pt] (-0.3, 0) -- (8.5, 0) node[right, font=\scriptsize] {$w$};
\draw[->, line width=0.6pt] (0, -0.5) -- (0, 5.5) node[above, font=\scriptsize] {objective};

\draw (2.5, -0.1) -- (2.5, 0.1);
\node[font=\tiny, below] at (2.5, -0.15) {$1\!-\!\epsilon$};
\draw (4, -0.1) -- (4, 0.1);
\node[font=\tiny, below] at (4, -0.15) {$1$};
\draw (5.5, -0.1) -- (5.5, 0.1);
\node[font=\tiny, below] at (5.5, -0.15) {$1\!+\!\epsilon$};

\draw[gray, line width=0.3pt, densely dotted] (2.5, 0) -- (2.5, 5.2);
\draw[gray, line width=0.3pt, densely dotted] (5.5, 0) -- (5.5, 5.2);

\draw[red!70!black, line width=0.9pt, dashed] (0.5, 4.8) -- (8, 0.1);
\node[font=\tiny, text=red!70!black] at (7.7, 0.6) {$w\hat{A}$};

\draw[berkeleyblue, line width=0.9pt]
    (0.5, 3.55) -- (2.5, 3.55) -- (5.5, 1.67) -- (8, 1.67);
\node[font=\tiny, text=berkeleyblue] at (7.3, 2.1) {$w_c\hat{A}$};

\draw[green!50!black, line width=2.2pt, opacity=0.45]
    (0.5, 3.55) -- (2.5, 3.55) -- (5.5, 1.67) -- (8, 0.1);
\node[font=\tiny, text=green!50!black, fill=white, inner sep=1pt] at (3.0, 0.9) {$\min$ (PPO)};

\draw[->, red!70!black, line width=0.5pt] (7.2, 1.1) -- (7.6, 0.3);
\node[font=\tiny, text=red!70!black, above] at (7.0, 1.15) {unbounded};

\end{tikzpicture}
\end{minipage}
\caption{The PPO-Clip surrogate, split by the sign of $\hat{A}$. Left panel ($\hat{A} > 0$): for $w > 1+\epsilon$ the minimum follows the clipped term, bounding the upside. Right panel ($\hat{A} < 0$): for $w > 1+\epsilon$ the minimum follows the unclipped term, so the penalty grows without bound. The asymmetry between the two panels is what makes PPO-Clip a one-sided trust region.}
\label{fig:app-ppo-clip-surrogate}
\end{figure}

\subsection{PPO-Clip flowchart}

\begin{figure}[H]
\centering
\begin{tikzpicture}[
    >=Stealth,
    every node/.style={font=\scriptsize},
    box/.style={
        rectangle, rounded corners=3pt, minimum width=10.5cm, minimum height=1.1cm,
        text width=10.0cm, align=center, inner sep=5pt
    },
    decision/.style={
        diamond, draw, inner sep=2pt,
        font=\scriptsize, aspect=2.5, align=center
    }
]

\node[box, fill=white, draw=black] (P1) {
    \textbf{Step 1 --- Collect Samples}\\[2pt]
    Run policy ${\color{blue}\pi_\theta}$ in the environment for $N$ trajectories of length $H$:\\
    $\{\tau^{(i)}\}_{i=1}^N$, \; where each $\tau^{(i)} = (s_1^{(i)}, a_1^{(i)}, r_1^{(i)}, \ldots, s_H^{(i)}, a_H^{(i)}, r_H^{(i)})$
};

\node[box, fill=white, draw=black, below=0.5cm of P1] (P2) {
    \textbf{Step 2 --- Compute Targets and Fit Value Function}\\[2pt]
    Targets: $y_t^{(i)} = r(s_t^{(i)}, a_t^{(i)}) + \gamma\,\hat{V}_\phi(s_{t+1}^{(i)})$\\
    Train critic $\hat{V}_\phi$ by regression on $\{(s_t^{(i)},\; y_t^{(i)})\}$
};

\node[box, fill=white, draw=black, below=0.5cm of P2] (P3) {
    \textbf{Step 3 --- Estimate Advantages (GAE)}\\[2pt]
    $\delta_t^{(i)} = r_t^{(i)} + \gamma\,\hat{V}_\phi(s_{t+1}^{(i)}) - \hat{V}_\phi(s_t^{(i)})$, \qquad
    $\hat{A}_{\mathrm{GAE},t}^{(i)} = \displaystyle\sum_{t'=t}^{\infty}(\gamma\lambda)^{t'-t}\,\delta_{t'}^{(i)}$
};

\node[box, fill=white, draw=black, below=0.5cm of P3] (P4) {
    \textbf{Step 4 --- Initialize Inner Loop}\\[2pt]
    Set $\theta' \leftarrow {\color{blue}\theta}$. \; Pre-compute and store ${\color{blue}\pi_\theta}(a_t^{(i)} \mid s_t^{(i)})$ for all samples
};

\node[box, fill=white, draw=black, minimum height=2.8cm, text width=10.0cm, below=0.5cm of P4] (P5) {
    \textbf{Step 5 --- Gradient Step on the Clipped Surrogate}\\[4pt]
    $\displaystyle L_{\mathrm{CLIP}}(\theta') = \frac{1}{N}\sum_{i=1}^{N}\sum_{t=1}^{H} \min\!\left\{
        \underbrace{\frac{\pi_{\theta'}(a_t^{(i)} \mid s_t^{(i)})}{{\color{blue}\pi_\theta}(a_t^{(i)} \mid s_t^{(i)})}}_{\displaystyle w_t^{(i)}} \hat{A}_t^{(i)}, \;\;
        \mathrm{clip}\!\left(w_t^{(i)},\; 1\!-\!\epsilon,\; 1\!+\!\epsilon\right) \hat{A}_t^{(i)}
    \right\}$\\[6pt]
    Update: $\theta' \leftarrow \theta' + \alpha\,\nabla_{\theta'} L_{\mathrm{CLIP}}(\theta')$
};

\node[decision, draw=black, fill=white, below=0.5cm of P5] (DK) {repeated $K$ times\,?};

\draw[->, line width=0.8pt] (P1) -- (P2);
\draw[->, line width=0.8pt] (P2) -- (P3);
\draw[->, line width=0.8pt] (P3) -- (P4);
\draw[->, line width=0.8pt] (P4) -- (P5);
\draw[->, line width=0.8pt] (P5) -- (DK);

\draw[->, berkeleygold!80!black, line width=1.4pt]
    (DK.east) -- ++(5.0cm, 0) node[pos=0.3, above, font=\tiny, text=berkeleygold!80!black] {no} |- (P5.east);

\draw[->, berkeleyblue!60, line width=1.4pt, densely dotted]
    (DK.west) -- ++(-5.0cm, 0) node[pos=0.3, above, font=\tiny, text=berkeleyblue!60] {yes: ${\color{blue}\theta} \leftarrow \theta'$} |- (P1.west);

\draw[decorate, decoration={brace, amplitude=5pt, mirror}, berkeleyblue, line width=0.6pt]
    ([xshift=-0.4cm]P1.north west) -- ([xshift=-0.4cm]P3.south west);
\node[font=\tiny, text=berkeleyblue, anchor=east, text width=1.5cm, align=center] at ([xshift=-0.7cm]P2.west) {\textbf{outer loop}\\(once per\\iteration)};

\draw[decorate, decoration={brace, amplitude=5pt, mirror}, red!70!black, line width=0.6pt]
    ([xshift=-0.4cm]P4.north west) -- ([xshift=-0.4cm]P5.south west);
\node[font=\tiny, text=red!70!black, anchor=east, text width=1.5cm, align=center] at ([xshift=-0.7cm, yshift=-0.3cm]P4.south west) {\textbf{inner loop}\\($K$ gradient\\steps)};

\end{tikzpicture}
\caption{PPO-Clip. The outer loop collects rollouts and fits the critic; the inner loop takes $K$ gradient steps on the clipped surrogate $L_{\mathrm{CLIP}}$ before refreshing the behaviour policy.}
\label{fig:app-ppo-clip-flow}
\end{figure}

\subsection{PPO-KL flowchart}

\begin{figure}[H]
\centering
\begin{tikzpicture}[
    >=Stealth,
    every node/.style={font=\scriptsize},
    box/.style={
        rectangle, rounded corners=3pt, minimum width=10.5cm, minimum height=1.1cm,
        text width=10.0cm, align=center, inner sep=5pt
    },
    decision/.style={
        diamond, draw, inner sep=2pt,
        font=\scriptsize, aspect=2.5, align=center
    }
]

\node[box, fill=white, draw=black] (Q1) {
    \textbf{Step 1 --- Collect Samples}\\[2pt]
    Run policy ${\color{blue}\pi_\theta}$ in the environment for $N$ trajectories of length $H$:\\
    $\{\tau^{(i)}\}_{i=1}^N$, \; where each $\tau^{(i)} = (s_1^{(i)}, a_1^{(i)}, r_1^{(i)}, \ldots, s_H^{(i)}, a_H^{(i)}, r_H^{(i)})$
};

\node[box, fill=white, draw=black, below=0.4cm of Q1] (Q2) {
    \textbf{Step 2 --- Compute Targets}\\[2pt]
    $y_t^{(i)} = r(s_t^{(i)}, a_t^{(i)}) + \gamma\,\hat{V}_\phi(s_{t+1}^{(i)})$
};

\node[box, fill=white, draw=black, minimum height=1.4cm, below=0.4cm of Q2] (Q3) {
    \textbf{Step 3 --- Fit Value Function}\\[2pt]
    Train critic $\hat{V}_\phi$ by regression: $\;\phi \leftarrow \phi - \alpha_\phi \nabla_\phi \frac{1}{NH}\displaystyle\sum_{i=1}^{N}\sum_{t=1}^{H}\bigl(\hat{V}_\phi(s_t^{(i)}) - y_t^{(i)}\bigr)^2$
};

\node[box, fill=white, draw=black, below=0.4cm of Q3] (Q4) {
    \textbf{Step 4 --- Estimate Advantages (GAE)}\\[2pt]
    $\delta_t^{(i)} = r_t^{(i)} + \gamma\,\hat{V}_\phi(s_{t+1}^{(i)}) - \hat{V}_\phi(s_t^{(i)})$, \qquad
    $\hat{A}_{\mathrm{GAE},t}^{(i)} = \displaystyle\sum_{t'=t}^{\infty}(\gamma\lambda)^{t'-t}\,\delta_{t'}^{(i)}$
};

\node[box, fill=white, draw=black, below=0.4cm of Q4] (Q5) {
    \textbf{Step 5 --- Initialize Inner Loop}\\[2pt]
    Set $\theta' \leftarrow {\color{blue}\theta}$. \; Pre-compute and store ${\color{blue}\pi_\theta}(a_t^{(i)} \mid s_t^{(i)})$ for all samples
};

\node[box, fill=white, draw=black, minimum height=1.8cm, text width=10.0cm, below=0.4cm of Q5] (Q6) {
    \textbf{Step 6 --- Gradient Step on the KL-Penalised Objective}\\[4pt]
    $\displaystyle \mathcal{L}_{\mathrm{KL}}(\theta') = \frac{1}{N}\sum_{i=1}^{N}\sum_{t=1}^{H}\left[
        \frac{\pi_{\theta'}(a_t^{(i)} \mid s_t^{(i)})}{{\color{blue}\pi_\theta}(a_t^{(i)} \mid s_t^{(i)})}\,\hat{A}_t^{(i)}
        + \beta \log \pi_{\theta'}(a_t^{(i)} \mid s_t^{(i)})\right]$\\[4pt]
    Update: $\theta' \leftarrow \theta' + \alpha\,\nabla_{\theta'} \mathcal{L}_{\mathrm{KL}}(\theta')$
};

\node[decision, draw=black, fill=white, below=0.4cm of Q6] (QDK) {repeated $K$ times\,?};

\node[box, fill=white, draw=black, below=0.4cm of QDK] (Q7) {
    \textbf{Step 7 --- Dual Variable Update}\\[2pt]
    $\beta \leftarrow \beta + \alpha_\beta\bigl(D_{\mathrm{KL}}({\color{blue}\pi_\theta} \| \pi_{\theta'}) - \epsilon\bigr)$
    \qquad {\footnotesize ($D_{\mathrm{KL}} > \epsilon \Rightarrow$ increase $\beta$; \; $D_{\mathrm{KL}} < \epsilon \Rightarrow$ decrease $\beta$)}
};

\node[box, fill=white, draw=black, below=0.4cm of Q7] (Q8) {
    \textbf{Step 8 --- Adopt New Policy}\\[2pt]
    ${\color{blue}\theta} \leftarrow \theta'$
};

\draw[->, line width=0.8pt] (Q1) -- (Q2);
\draw[->, line width=0.8pt] (Q2) -- (Q3);
\draw[->, line width=0.8pt] (Q3) -- (Q4);
\draw[->, line width=0.8pt] (Q4) -- (Q5);
\draw[->, line width=0.8pt] (Q5) -- (Q6);
\draw[->, line width=0.8pt] (Q6) -- (QDK);
\draw[->, line width=0.8pt] (QDK) -- (Q7) node[midway, right, font=\tiny] {yes};
\draw[->, line width=0.8pt] (Q7) -- (Q8);

\draw[->, berkeleygold!80!black, line width=1.4pt]
    (QDK.east) -- ++(4.5cm, 0) node[pos=0.3, above, font=\tiny, text=berkeleygold!80!black] {no} |- (Q6.east);

\draw[->, berkeleyblue!60, line width=1.4pt, densely dotted]
    (Q8.west) -- ++(-4.5cm, 0) node[pos=0.3, above, font=\tiny, text=berkeleyblue!60] {next iteration} |- (Q1.west);

\draw[decorate, decoration={brace, amplitude=5pt, mirror}, berkeleyblue, line width=0.6pt]
    ([xshift=-0.4cm]Q1.north west) -- ([xshift=-0.4cm]Q4.south west);
\node[font=\tiny, text=berkeleyblue, anchor=east, text width=1.5cm, align=center] at ([xshift=-0.7cm, yshift=0.5cm]Q3.west) {\textbf{outer loop}\\(once per\\iteration)};

\draw[decorate, decoration={brace, amplitude=5pt, mirror}, red!70!black, line width=0.6pt]
    ([xshift=-0.4cm]Q5.north west) -- ([xshift=-0.4cm]Q6.south west);
\node[font=\tiny, text=red!70!black, anchor=east, text width=1.5cm, align=center] at ([xshift=-0.7cm]Q5.south west) {\textbf{inner loop}\\($K$ gradient\\steps)};

\end{tikzpicture}
\caption{PPO-KL. The objective is the unclipped importance-weighted advantage plus a KL penalty with a scalar coefficient $\beta$; after the inner loop $\beta$ is adjusted by a dual gradient step that targets a desired KL.}
\label{fig:app-ppo-kl-flow}
\end{figure}

\subsection{Effective gradient multiplier as a function of $w$}

\begin{figure}[H]
\centering
\begin{tikzpicture}[>=Stealth, scale=0.85, transform shape]

\begin{scope}[shift={(-5,0)}]
\node[font=\small\bfseries, text=berkeleyblue] at (2.5, 4.8) {PPO-Clip};

\fill[green!8] (1.5, 0) rectangle (3.5, 4.0);
\draw[green!50!black, line width=0.8pt, dashed] (1.5, 0) -- (1.5, 4.0);
\draw[green!50!black, line width=0.8pt, dashed] (3.5, 0) -- (3.5, 4.0);

\draw[->, line width=0.5pt] (0, 0) -- (5.5, 0) node[right, font=\tiny] {$w$};
\draw[->, line width=0.5pt] (0, -2.5) -- (0, 4.3) node[above, font=\tiny] {effective gradient};

\draw[gray, line width=0.3pt, densely dotted] (0, 0) -- (5.3, 0);

\node[font=\tiny, below] at (1.5, -0.1) {$1\!-\!\epsilon$};
\node[font=\tiny, below] at (2.5, -0.1) {$1$};
\node[font=\tiny, below] at (3.5, -0.1) {$1\!+\!\epsilon$};

\draw[red!70!black, line width=1.8pt]
    (0.3, 2.0) -- (3.5, 2.0);
\draw[red!70!black, line width=1.8pt]
    (3.5, 0) -- (5.2, 0);
\draw[red!70!black, line width=0.6pt, densely dotted] (3.5, 2.0) -- (3.5, 0);
\node[font=\tiny, text=red!70!black] at (4.5, 0.4) {$\hat{A}>0$: killed};

\draw[berkeleyblue, line width=1.8pt]
    (0.3, 0) -- (1.5, 0);
\draw[berkeleyblue, line width=1.8pt]
    (1.5, -1.5) -- (5.2, -1.5);
\draw[berkeleyblue, line width=0.6pt, densely dotted] (1.5, 0) -- (1.5, -1.5);
\node[font=\tiny, text=berkeleyblue] at (4.5, -1.1) {$\hat{A}<0$: active};
\node[font=\tiny, text=berkeleyblue] at (0.7, 0.4) {killed};

\node[font=\tiny, text=green!50!black] at (2.5, -2.3) {$[1\!-\!\epsilon,\; 1\!+\!\epsilon]$};
\end{scope}

\begin{scope}[shift={(3,0)}]
\node[font=\small\bfseries, text=red!70!black] at (2.5, 4.8) {PPO-KL (scalar $\beta$)};

\fill[green!8] (1.5, 0) rectangle (3.5, 4.0);
\draw[green!50!black, line width=0.8pt, dashed] (1.5, 0) -- (1.5, 4.0);
\draw[green!50!black, line width=0.8pt, dashed] (3.5, 0) -- (3.5, 4.0);

\draw[->, line width=0.5pt] (0, 0) -- (5.5, 0) node[right, font=\tiny] {$w$};
\draw[->, line width=0.5pt] (0, -2.5) -- (0, 4.3) node[above, font=\tiny] {effective gradient};

\draw[gray, line width=0.3pt, densely dotted] (0, 0) -- (5.3, 0);

\node[font=\tiny, below] at (1.5, -0.1) {$1\!-\!\epsilon$};
\node[font=\tiny, below] at (2.5, -0.1) {$1$};
\node[font=\tiny, below] at (3.5, -0.1) {$1\!+\!\epsilon$};

\draw[red!30, line width=0.4pt, densely dashed] (0.4, 2.0) -- (5.3, 2.0);
\draw[berkeleyblue!30, line width=0.4pt, densely dashed] (0.4, -1.5) -- (5.3, -1.5);
\node[font=\tiny, text=red!40] at (5.5, 1.8) {$\hat{A}$};
\node[font=\tiny, text=berkeleyblue!40] at (5.5, -1.7) {$\hat{A}$};

\draw[red!70!black, line width=1.8pt]
    plot[variable=\x, domain=0.5:5.2, samples=100] (\x, {2.0 + 1/\x});
\node[font=\tiny, text=red!70!black] at (4.8, 2.7) {$\hat{A}\!+\!\tfrac{\beta}{w}$};

\draw[berkeleyblue, line width=1.8pt]
    plot[variable=\x, domain=0.35:5.2, samples=100] (\x, {-1.5 + 1/\x});
\node[font=\tiny, text=berkeleyblue] at (4.8, -1.0) {$\hat{A}\!+\!\tfrac{\beta}{w}$};

\fill[berkeleyblue] (0.667, 0) circle (2pt);
\node[font=\tiny, text=berkeleyblue, above right] at (0.72, 0.15) {$w\!=\!\tfrac{\beta}{|\hat{A}|}$};

\node[font=\tiny, text=green!50!black] at (2.5, -2.3) {effective trust region};
\end{scope}

\draw[<->, line width=1.2pt, berkeleyblue!60] (-0.3, 2.0) -- (2.7, 2.0);
\node[font=\scriptsize, text=berkeleyblue!60] at (1.2, 2.4) {$\approx$ inside TR};

\end{tikzpicture}
\caption{Per-sample effective gradient multiplier as a function of the importance ratio $w$. Left: PPO-Clip is a hard step. The multiplier equals $\hat{A}$ inside $[1-\epsilon, 1+\epsilon]$ and on $\mathcal{I}_{\text{pass}}$, and zero on $\mathcal{I}_{\text{kill}}$. Right: a scalar PPO-KL produces the smooth curve $\hat{A} + \beta/w$ (the asymptote is $\hat{A}$). The two coincide inside the trust region; outside, PPO-Clip is a step function while scalar PPO-KL is a smooth one. The per-sample $\beta_t$ construction of the main text is the choice that recovers the step function.}
\label{fig:app-gradient-comparison}
\end{figure}

\subsection{Morphing from scalar to per-sample $\beta$}

\begin{figure}[H]
\centering
\begin{tikzpicture}[scale=0.62, transform shape, >=Stealth]

\begin{scope}[shift={(-9,0)}]
\node[font=\small\bfseries, text=berkeleyblue] at (2.7, 4.7) {(a) scalar $\beta$};
\node[font=\tiny] at (2.7, 4.15) {one $\beta$ for every sample};

\fill[green!8] (1.7, 0) rectangle (3.7, 3.8);
\draw[green!50!black, dashed, line width=0.5pt] (1.7, 0) -- (1.7, 3.8);
\draw[green!50!black, dashed, line width=0.5pt] (3.7, 0) -- (3.7, 3.8);

\draw[->] (0, 0) -- (5.5, 0) node[right, font=\tiny] {$w$};
\draw[->] (0, -0.4) -- (0, 4.1) node[above, font=\tiny] {mult.};
\draw[gray, dotted, line width=0.3pt] (0.3, 2.0) -- (5.3, 2.0);
\node[font=\tiny, left] at (-0.05, 2.0) {$\hat{A}$};

\node[font=\tiny, below] at (1.7, -0.1) {$1\!-\!\epsilon$};
\node[font=\tiny, below] at (2.7, -0.1) {$1$};
\node[font=\tiny, below] at (3.7, -0.1) {$1\!+\!\epsilon$};

\draw[red!70!black, line width=1.7pt]
    plot[variable=\x, domain=0.4:5.3, samples=120] (\x, {2.0 + 0.9/\x});
\node[font=\tiny, text=red!70!black] at (4.6, 2.65) {$\hat{A}+\beta/w$};
\end{scope}

\begin{scope}[shift={(-2,0)}]
\node[font=\small\bfseries, text=berkeleyblue] at (2.7, 4.7) {(b) adaptive scalar $\beta(t)$};
\node[font=\tiny] at (2.7, 4.15) {$\beta$ varies in time, same per sample};

\fill[green!8] (1.7, 0) rectangle (3.7, 3.8);
\draw[green!50!black, dashed, line width=0.5pt] (1.7, 0) -- (1.7, 3.8);
\draw[green!50!black, dashed, line width=0.5pt] (3.7, 0) -- (3.7, 3.8);

\draw[->] (0, 0) -- (5.5, 0) node[right, font=\tiny] {$w$};
\draw[->] (0, -0.4) -- (0, 4.1) node[above, font=\tiny] {mult.};
\draw[gray, dotted, line width=0.3pt] (0.3, 2.0) -- (5.3, 2.0);
\node[font=\tiny, left] at (-0.05, 2.0) {$\hat{A}$};

\node[font=\tiny, below] at (1.7, -0.1) {$1\!-\!\epsilon$};
\node[font=\tiny, below] at (2.7, -0.1) {$1$};
\node[font=\tiny, below] at (3.7, -0.1) {$1\!+\!\epsilon$};

\draw[red!70!black, line width=1.7pt]
    plot[variable=\x, domain=0.4:5.3, samples=120] (\x, {2.0 + 0.4/\x});
\draw[red!70!black, line width=0.8pt, densely dashed, opacity=0.6]
    plot[variable=\x, domain=0.4:5.3, samples=120] (\x, {2.0 + 1.1/\x});
\node[font=\tiny, text=red!70!black] at (4.6, 2.3) {$\hat{A}+\beta(t)/w$};
\end{scope}

\begin{scope}[shift={(5,0)}]
\node[font=\small\bfseries, text=berkeleyblue] at (2.7, 4.7) {(c) per-sample $\beta_t = -w\hat{A}\,\indicator_{\mathcal{I}_{\mathrm{kill}}}$};
\node[font=\tiny, text=red!70!black] at (2.7, 4.15) {$\equiv$ PPO-Clip};

\fill[green!8] (1.7, 0) rectangle (3.7, 3.8);
\draw[green!50!black, dashed, line width=0.5pt] (1.7, 0) -- (1.7, 3.8);
\draw[green!50!black, dashed, line width=0.5pt] (3.7, 0) -- (3.7, 3.8);

\draw[->] (0, 0) -- (5.5, 0) node[right, font=\tiny] {$w$};
\draw[->] (0, -0.4) -- (0, 4.1) node[above, font=\tiny] {mult.};
\draw[gray, dotted, line width=0.3pt] (0.3, 2.0) -- (5.3, 2.0);
\node[font=\tiny, left] at (-0.05, 2.0) {$\hat{A}$};

\node[font=\tiny, below] at (1.7, -0.1) {$1\!-\!\epsilon$};
\node[font=\tiny, below] at (2.7, -0.1) {$1$};
\node[font=\tiny, below] at (3.7, -0.1) {$1\!+\!\epsilon$};

\draw[red!70!black, line width=2pt] (0.3, 2.0) -- (3.7, 2.0);
\draw[red!70!black, line width=2pt] (3.7, 0) -- (5.3, 0);
\draw[red!70!black, line width=0.6pt, dashed] (3.7, 2.0) -- (3.7, 0);
\node[font=\tiny, text=red!50!black] at (4.5, 0.4) {killed};
\end{scope}

\draw[->, line width=1.5pt, berkeleyblue!70] (-3.5, 1.9) -- (-2.8, 1.9);
\draw[->, line width=1.5pt, berkeleyblue!70] (3.5, 1.9) -- (4.2, 1.9);

\end{tikzpicture}
\caption{Morphing from scalar to per-sample~$\beta$, shown for the case $\hat{A}>0$. (a)~A constant scalar $\beta$ produces the smooth curve $\hat{A}+\beta/w$, which can never coincide with the PPO-Clip step. (b)~Letting $\beta$ adapt in time but stay scalar shifts the curve along its family but cannot reshape it. (c)~Letting $\beta$ become per-sample, with $\beta_t = -w_t\hat{A}_t$ exactly on $\mathcal{I}_{\mathrm{kill}}$ and zero elsewhere, snaps the curve onto the PPO-Clip step function. The clip is the per-sample limit of the KL surrogate; scalar penalties never reach it.}
\label{fig:app-morph}
\end{figure}

\subsection{The $\beta_t$ map in the $(w, \hat{A})$ plane}

\begin{figure}[H]
\centering
\begin{tikzpicture}[scale=1.55, >=Stealth]

\fill[red!15] (0.5, 0) rectangle (3, 2);
\fill[red!25] (1.0, 0.4) rectangle (3, 2);
\fill[red!40] (1.5, 0.8) rectangle (3, 2);
\fill[red!55] (2.0, 1.2) rectangle (3, 2);
\fill[red!70] (2.5, 1.6) rectangle (3, 2);

\fill[red!15] (-3, -2) rectangle (-0.5, 0);
\fill[red!25] (-3, -2) rectangle (-1.0, -0.4);
\fill[red!40] (-3, -2) rectangle (-1.5, -0.8);
\fill[red!55] (-3, -2) rectangle (-2.0, -1.2);
\fill[red!70] (-3, -2) rectangle (-2.5, -1.6);

\fill[blue!8] (-3, 0) rectangle (-0.5, 2);
\fill[blue!8] (0.5, -2) rectangle (3, 0);

\fill[green!5] (-0.5, -2) rectangle (0.5, 2);

\draw[green!60!black, dashed, line width=0.9pt] (-0.5, -2.1) -- (-0.5, 2.1);
\draw[green!60!black, dashed, line width=0.9pt] (0.5, -2.1) -- (0.5, 2.1);

\draw[->, line width=0.7pt] (-3.4, 0) -- (3.4, 0) node[right, font=\footnotesize] {$w$};
\draw[->, line width=0.7pt] (0, -2.4) -- (0, 2.4) node[above, font=\footnotesize] {$\hat{A}$};

\node[font=\scriptsize, below right] at (0.05, -0.05) {$1$};
\node[font=\scriptsize, below=2pt] at (-0.5, -0.05) {$1\!-\!\epsilon$};
\node[font=\scriptsize, below=2pt] at (0.5, -0.05) {$1\!+\!\epsilon$};

\node[font=\footnotesize\bfseries, text=red!70!black] at (1.8, 1.85) {$\mathcal{I}_{\mathrm{kill}}$};
\node[font=\scriptsize, text=red!70!black, fill=white, fill opacity=0.7, text opacity=1, inner sep=1pt] at (1.8, 1.55) {$\beta_t = -w\hat{A}<0$};

\node[font=\footnotesize\bfseries, text=red!70!black] at (-1.8, -1.85) {$\mathcal{I}_{\mathrm{kill}}$};
\node[font=\scriptsize, text=red!70!black, fill=white, fill opacity=0.7, text opacity=1, inner sep=1pt] at (-1.8, -1.55) {$\beta_t = -w\hat{A}>0$};

\node[font=\footnotesize\bfseries, text=blue!50!black] at (-1.8, 1.85) {$\mathcal{I}_{\mathrm{pass}}$};
\node[font=\scriptsize, text=blue!50!black] at (-1.8, 1.55) {$\beta_t = 0$};

\node[font=\footnotesize\bfseries, text=blue!50!black] at (1.8, -1.85) {$\mathcal{I}_{\mathrm{pass}}$};
\node[font=\scriptsize, text=blue!50!black] at (1.8, -1.55) {$\beta_t = 0$};

\node[font=\footnotesize\bfseries, text=green!40!black] at (0, 1.85) {$\mathcal{I}_{\mathrm{in}}$};
\node[font=\scriptsize, text=green!40!black] at (0, 1.55) {$\beta_t\!=\!0$};

\end{tikzpicture}
\caption{The $\beta_t$ landscape in $(w, \hat{A})$ space. Vertical dashed lines mark the trust region $[1-\epsilon, 1+\epsilon]$. Two corners (top-right $w>1+\epsilon,\hat{A}>0$ and bottom-left $w<1-\epsilon,\hat{A}<0$) carry the per-sample coefficient $\beta_t = -w\hat{A}$, with shading indicating $|\beta_t|$. Everywhere else $\beta_t = 0$. The support and value of $\beta_t$ reproduce the PPO-Clip gradient sample by sample.}
\label{fig:app-beta-map}
\end{figure}

\subsection{The same per-sample gradient via two surrogates}

\begin{figure}[H]
\centering
\begin{tikzpicture}[
  >=Stealth,
  font=\small,
  loss/.style={
    rectangle, rounded corners=3pt, draw=black, line width=0.7pt,
    minimum width=6.4cm, minimum height=1.3cm, align=center, fill=white
  },
  sample/.style={
    rectangle, rounded corners=3pt, draw=black, line width=0.7pt,
    minimum width=5cm, minimum height=0.9cm, align=center, fill=white
  },
  result/.style={
    rectangle, rounded corners=3pt, draw=black, line width=0.7pt,
    minimum width=8.5cm, minimum height=1.4cm, align=center, fill=white
  }
]

\node[sample] (S) at (0, 5) {sample $(s_t, a_t, \hat{A}_t, w_t)$};

\node[loss] (LC) at (-4.2, 3) {
  \textbf{PPO-Clip per-sample term} \\[2pt]
  $\ell_t^{\mathrm{CLIP}} = \min\!\bigl(w_t \hat{A}_t,\, \mathrm{clip}(w_t)\hat{A}_t\bigr)$
};
\node[loss] (LK) at (4.2, 3) {
  \textbf{per-sample KL term} \\[2pt]
  $\ell_t^{\mathrm{KL}} = w_t \hat{A}_t + \beta_t\, \log \pi_{\theta'}(a_t \mid s_t)$
};

\draw[->, line width=0.7pt] (S.south) -- (LC.north);
\draw[->, line width=0.7pt] (S.south) -- (LK.north);

\node[result] (R) at (0, 0.4) {
  $g_t \;=\; \begin{cases}
    \hat{A}_t\, \nabla_{\theta'} w_t & \text{if } (i,t) \in \mathcal{I}_{\mathrm{in}} \cup \mathcal{I}_{\mathrm{pass}}, \\[2pt]
    0 & \text{if } (i,t) \in \mathcal{I}_{\mathrm{kill}}.
  \end{cases}$
};

\draw[->, line width=0.7pt] (LC.south) -- ++(0, -0.5) node[midway, left, font=\scriptsize] {$\nabla_{\theta'}$} -| ($(R.north) + (-2.5, 0)$);
\draw[->, line width=0.7pt] (LK.south) -- ++(0, -0.5) node[midway, right, font=\scriptsize] {$\nabla_{\theta'}$ with $\beta_t = -w_t\hat{A}_t\,\indicator_{\mathcal{I}_{\mathrm{kill}}}$} -| ($(R.north) + (2.5, 0)$);

\end{tikzpicture}
\caption{The per-sample gradient under the two surrogates. The same sample feeds both per-sample terms; differentiating each in $\theta'$ yields the same $g_t$. PPO-Clip selects the active branch of the $\min$ to kill the gradient on $\mathcal{I}_{\mathrm{kill}}$; the per-sample KL surrogate sets $\beta_t = -w_t \hat{A}_t$ on $\mathcal{I}_{\mathrm{kill}}$ and $0$ elsewhere, which makes the bracket $\hat{A}_t + \beta_t/w_t$ vanish on those samples.}
\label{fig:app-two-routes}
\end{figure}

\clearpage
\section{Supplementary experimental results}
\label{app:exp}

\subsection{Per-task equivalence}

Figure~\ref{fig:identity} reports PPO-Clip and the per-sample PPO-KL
surrogate on each of the seven tasks separately, namely CartPole-v1,
LunarLander-v3, Hopper-v4, HalfCheetah-v4, Walker2d-v4, Ant-v4, and
Humanoid-v4. The two are indistinguishable on every environment. Each
curve is a mean over $5$ seeds with a $\pm$ std band.

\begin{figure}[H]
  \centering
  \begin{subfigure}[t]{0.49\textwidth}
    \centering
    \includegraphics[width=\textwidth]{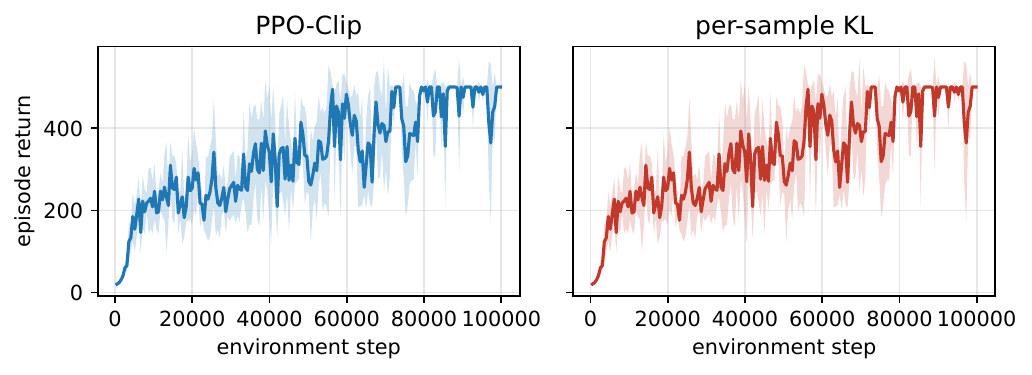}
    \caption{CartPole-v1}
    \label{fig:id-cartpole}
  \end{subfigure}
  \hfill
  \begin{subfigure}[t]{0.49\textwidth}
    \centering
    \includegraphics[width=\textwidth]{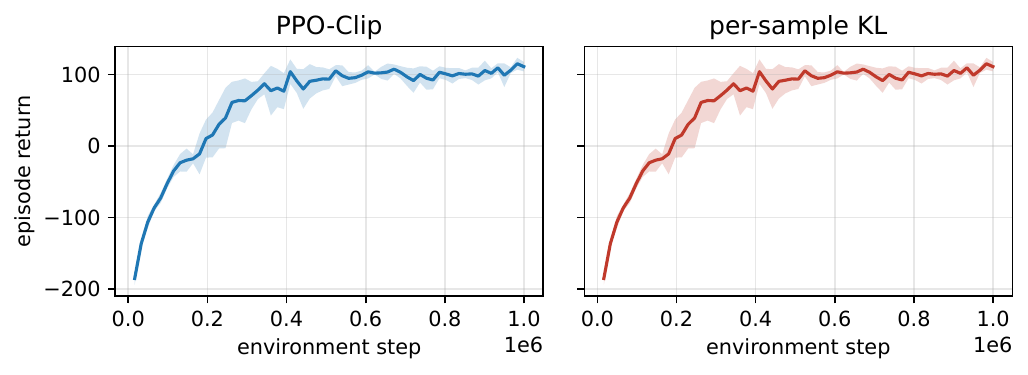}
    \caption{LunarLander-v3}
    \label{fig:id-lunarlander}
  \end{subfigure}

  \begin{subfigure}[t]{0.49\textwidth}
    \centering
    \includegraphics[width=\textwidth]{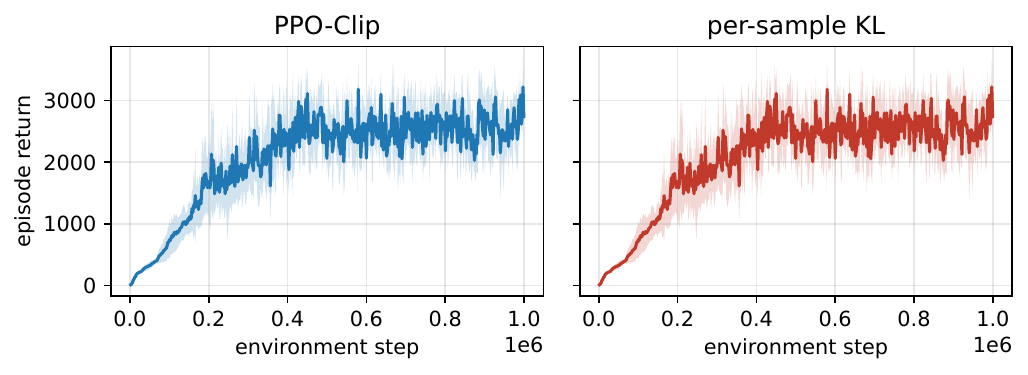}
    \caption{Hopper-v4}
    \label{fig:id-hopper}
  \end{subfigure}
  \hfill
  \begin{subfigure}[t]{0.49\textwidth}
    \centering
    \includegraphics[width=\textwidth]{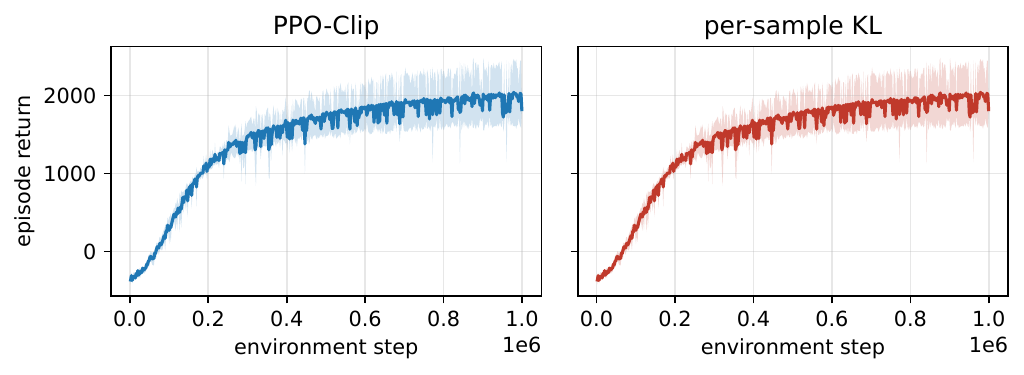}
    \caption{HalfCheetah-v4}
    \label{fig:id-halfcheetah}
  \end{subfigure}

  \begin{subfigure}[t]{0.49\textwidth}
    \centering
    \includegraphics[width=\textwidth]{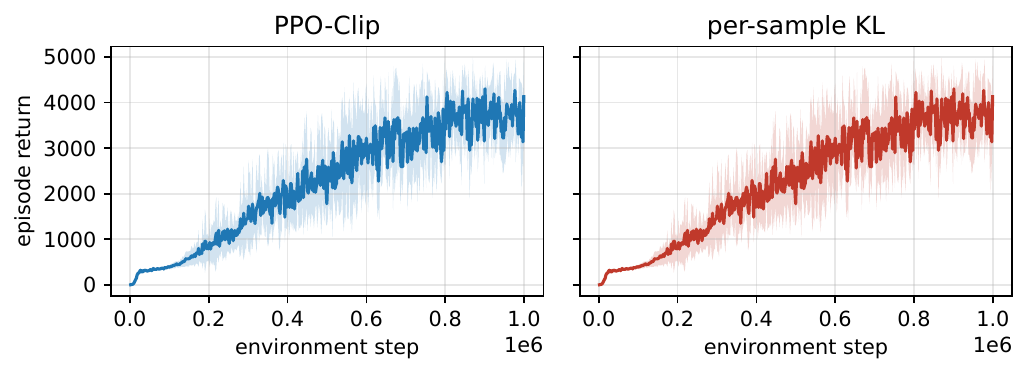}
    \caption{Walker2d-v4}
    \label{fig:id-walker2d}
  \end{subfigure}
  \hfill
  \begin{subfigure}[t]{0.49\textwidth}
    \centering
    \includegraphics[width=\textwidth]{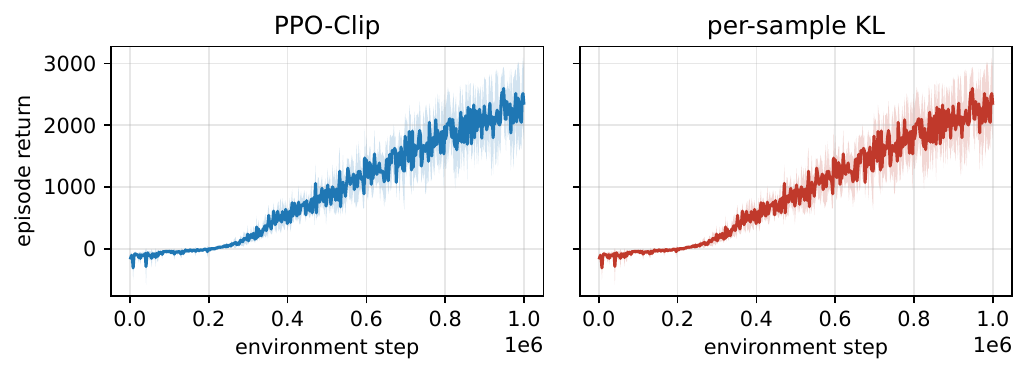}
    \caption{Ant-v4}
    \label{fig:id-ant}
  \end{subfigure}

  \begin{subfigure}[t]{0.49\textwidth}
    \centering
    \includegraphics[width=\textwidth]{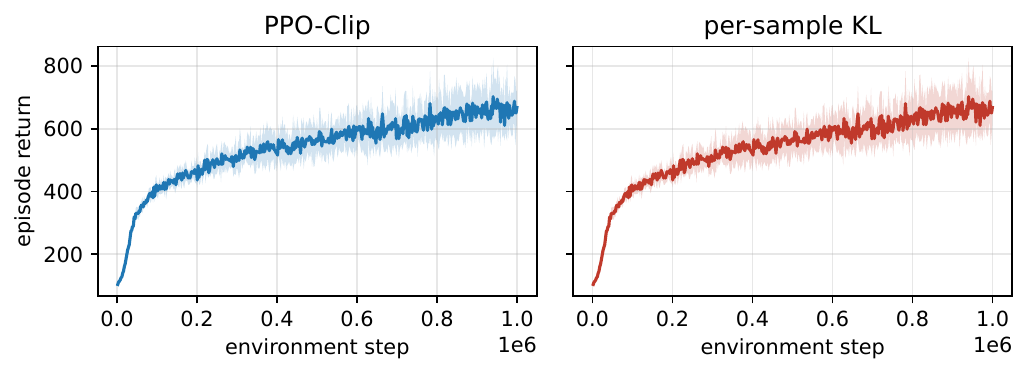}
    \caption{Humanoid-v4}
    \label{fig:id-humanoid}
  \end{subfigure}
  \caption{PPO-Clip and per-sample PPO-KL on each task,
  mean over $5$ seeds with a $\pm$ std band. The two coincide on every
  environment.}
  \label{fig:identity}
\end{figure}

\subsection{Trust-region knob sweeps}

To place each scalar-$\beta$ baseline at a defensible operating point we
sweep the trust-region knob of each variant; the main-text baselines use
the best value of each knob.

\begin{figure}[H]
  \centering
  \includegraphics[width=\textwidth]{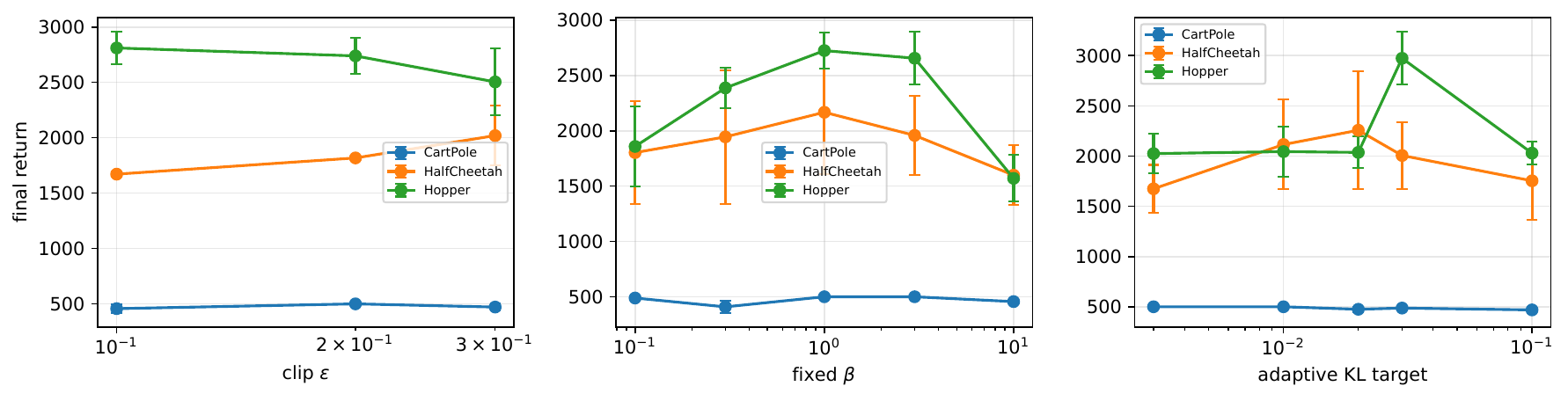}
  \caption{Final return (mean $\pm$ standard error over
  $5$ seeds) on CartPole-v1, HalfCheetah-v4, and Hopper-v4 as a function of each
  trust-region knob: clip $\epsilon$ for PPO-Clip, fixed $\beta$ for
  PPO-KL, and the KL target for adaptive PPO-KL.}
  \label{fig:knob-sweeps}
\end{figure}

\subsection{Clipping partition}

Figure~\ref{fig:partition} shows, for PPO-Clip, the fraction of each
minibatch that falls in $\mathcal{I}_{\mathrm{kill}}$ and
$\mathcal{I}_{\mathrm{pass}}$ over training. This is the empirical view of
the partition on which the identity rests: the penalty $\beta_t$ is
non-zero only on $\mathcal{I}_{\mathrm{kill}}$, and its reach grows with
task difficulty, exceeding half the batch on Humanoid-v4.

\begin{figure}[H]
  \centering
  \begin{subfigure}[t]{0.32\textwidth}
    \includegraphics[width=\textwidth]{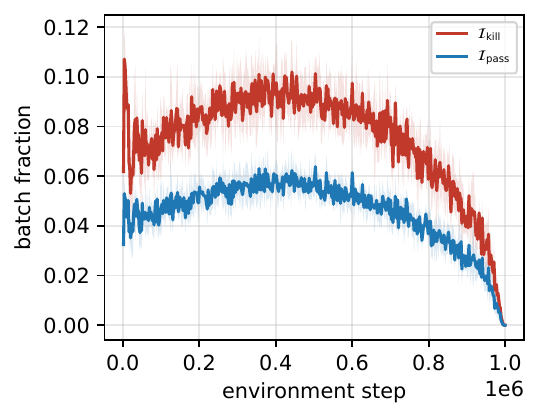}
    \caption{Hopper-v4}
  \end{subfigure}
  \hfill
  \begin{subfigure}[t]{0.32\textwidth}
    \includegraphics[width=\textwidth]{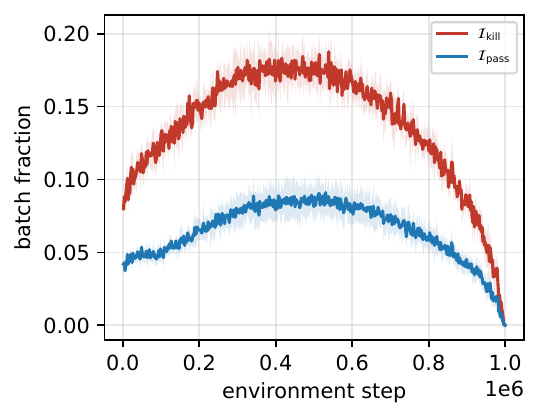}
    \caption{HalfCheetah-v4}
  \end{subfigure}
  \hfill
  \begin{subfigure}[t]{0.32\textwidth}
    \includegraphics[width=\textwidth]{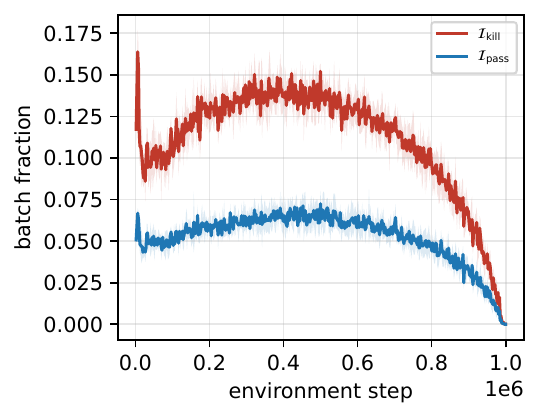}
    \caption{Walker2d-v4}
  \end{subfigure}

  \vspace{4pt}

  \begin{subfigure}[t]{0.32\textwidth}
    \includegraphics[width=\textwidth]{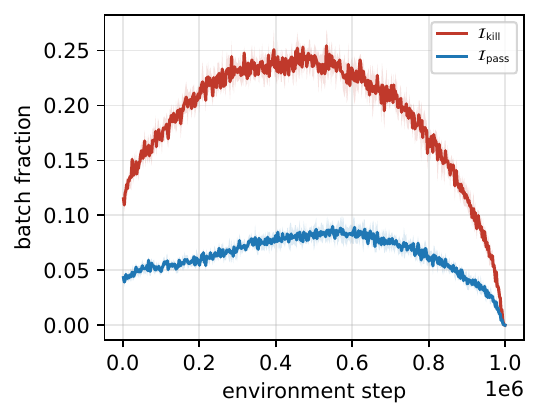}
    \caption{Ant-v4}
  \end{subfigure}
  \hspace{0.03\textwidth}
  \begin{subfigure}[t]{0.32\textwidth}
    \includegraphics[width=\textwidth]{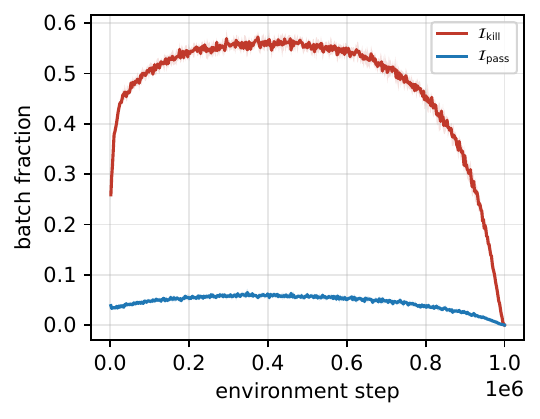}
    \caption{Humanoid-v4}
  \end{subfigure}
  \caption{Fraction of the PPO-Clip minibatch in
  $\mathcal{I}_{\mathrm{kill}}$ and $\mathcal{I}_{\mathrm{pass}}$ over
  training (mean over $5$ seeds, $\pm$ std band). The per-sample
  coefficient $\beta_t$ acts only on $\mathcal{I}_{\mathrm{kill}}$.}
  \label{fig:partition}
\end{figure}

\subsection{Per-sample coefficient}

Figure~\ref{fig:beta} plots the per-sample coefficient $\beta_t$ over
training for the per-sample variant. Its median is zero, since most
samples lie outside $\mathcal{I}_{\mathrm{kill}}$, while the tails carry
the active penalty $-w_t\hat{A}_t$ and widen on the harder tasks.

\begin{figure}[H]
  \centering
  \begin{subfigure}[t]{0.32\textwidth}
    \includegraphics[width=\textwidth]{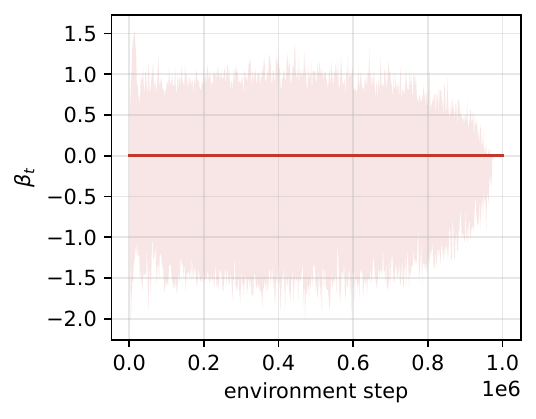}
    \caption{Hopper-v4}
  \end{subfigure}
  \hfill
  \begin{subfigure}[t]{0.32\textwidth}
    \includegraphics[width=\textwidth]{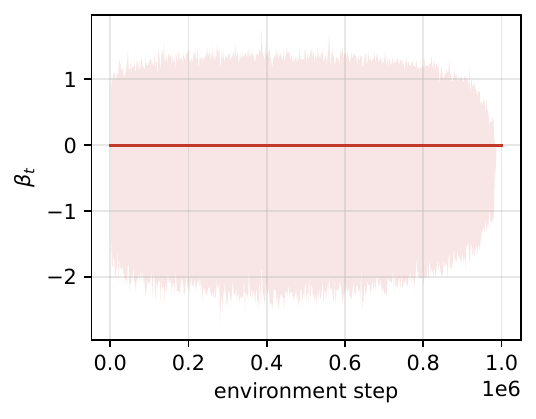}
    \caption{HalfCheetah-v4}
  \end{subfigure}
  \hfill
  \begin{subfigure}[t]{0.32\textwidth}
    \includegraphics[width=\textwidth]{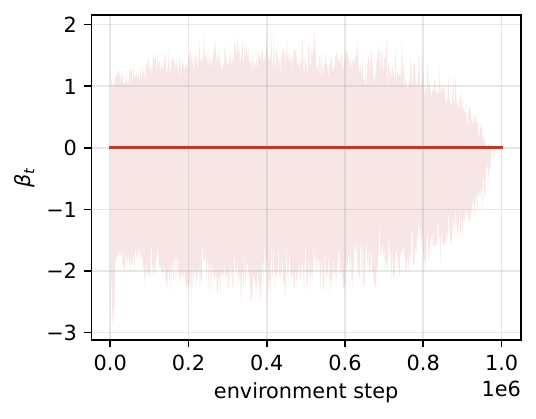}
    \caption{Walker2d-v4}
  \end{subfigure}

  \vspace{4pt}

  \begin{subfigure}[t]{0.32\textwidth}
    \includegraphics[width=\textwidth]{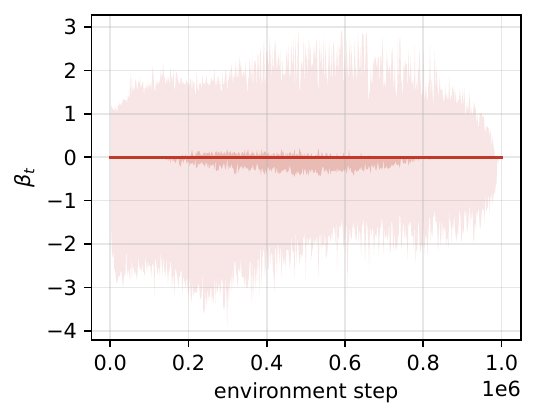}
    \caption{Ant-v4}
  \end{subfigure}
  \hspace{0.03\textwidth}
  \begin{subfigure}[t]{0.32\textwidth}
    \includegraphics[width=\textwidth]{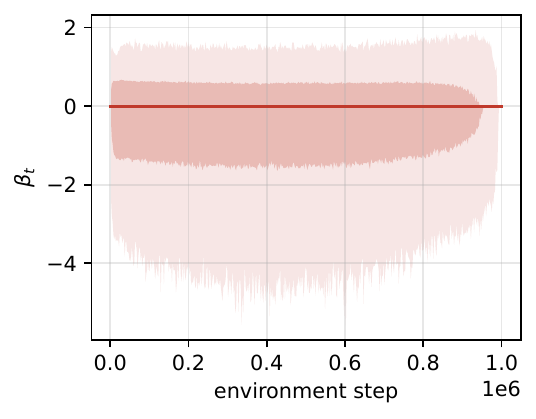}
    \caption{Humanoid-v4}
  \end{subfigure}
  \caption{Per-sample coefficient $\beta_t$ over training for the
  per-sample variant. The median is zero because $\beta_t$ vanishes on
  every sample outside the kill region; the shaded bands show how far the
  active coefficient $-w_t\hat{A}_t$ reaches on the samples in
  $\mathcal{I}_{\mathrm{kill}}$, widening on the high-dimensional tasks.}
  \label{fig:beta}
\end{figure}

\end{document}